\theoremstyle{plain}
\newtheorem{theorem}{Theorem}[section]
\newtheorem{proposition}[theorem]{Proposition}
\newtheorem{lemma}[theorem]{Lemma}
\theoremstyle{definition}
\newtheorem{definition}[theorem]{Definition}
\theoremstyle{remark}
\newcommand{\x}{{\boldsymbol{x}}}
\newcommand{\z}{{\boldsymbol{z}}}
\newcommand{\s}{{\boldsymbol{s}}}
\newcommand{\w}{{\boldsymbol{w}}}
\newcommand{\con}{{\boldsymbol{c}}}
\newcommand{\Gauss}{\mathcal{N}}
\newcommand{\baraa}{\bar{\alpha}}
\newcommand{\barbb}{\bar{\beta}}
\newcommand{\eye}{\mathbf{I}}
\newcommand{\dd}{{\rm d}}
\begin{document}

\twocolumn[
\icmltitle{Angle Domain Guidance: Latent Diffusion Requires Rotation Rather Than Extrapolation}



\icmlsetsymbol{equal}{*}

\begin{icmlauthorlist}
\icmlauthor{Cheng Jin}{tsinghua}
\icmlauthor{Zhenyu Xiao}{equal,tsinghua}
\icmlauthor{Chutao Liu}{equal,tsinghua2}
\icmlauthor{Yuantao Gu}{tsinghua}
\end{icmlauthorlist}

\icmlaffiliation{tsinghua}{Department of Electronic Engineering, Tsinghua University, Beijing, China}
\icmlaffiliation{tsinghua2}{Zhili College, Tsinghua University, Beijing, China}
\icmlcorrespondingauthor{Yuantao Gu}{gyt@tsinghua.edu.cn}

\icmlkeywords{Diffusion Model,Classifier Free Guidance,Text-to-Image Generation}

\vskip 0.3in
]


\printAffiliationsAndNotice{\icmlEqualContribution}  

\begin{abstract}
Classifier-free guidance (CFG) has emerged as a pivotal advancement in text-to-image latent diffusion models, establishing itself as a cornerstone technique for achieving high-quality image synthesis. However, under high guidance weights, where text-image alignment is significantly enhanced, CFG also leads to pronounced color distortions in the generated images. We identify that these distortions stem from the amplification of sample norms in the latent space. We present a theoretical framework that elucidates the mechanisms of norm amplification and anomalous diffusion phenomena induced by classifier-free guidance. Leveraging our theoretical insights and the latent space structure, we propose an Angle Domain Guidance (ADG) algorithm. ADG constrains magnitude variations while optimizing angular alignment, thereby mitigating color distortions while preserving the enhanced text-image alignment achieved at higher guidance weights. Experimental results demonstrate that ADG significantly outperforms existing methods, generating images that not only maintain superior text alignment but also exhibit improved color fidelity and better alignment with human perceptual preferences.
\end{abstract}
\section{Introduction}

\label{introduction}
\begin{figure*}[ht]
\vskip 0.2in
    \centering
        \includegraphics[width=0.9\textwidth]{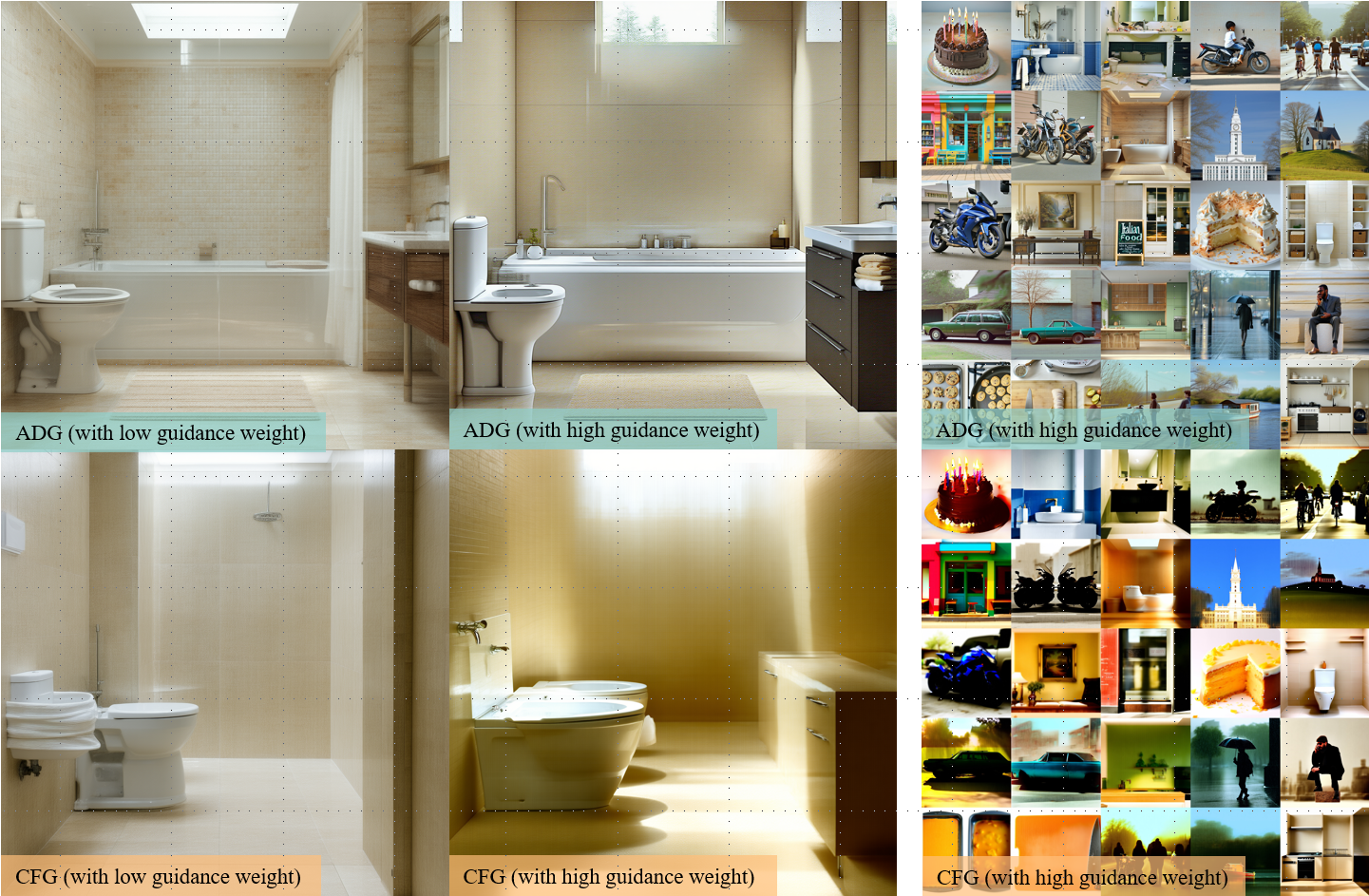}
    \caption{Comparison of ADG and CFG using the same random seed with the prompt "\textbf{\textit{A bathroom with a toilet and a bathtub.}}" At low guidance weights, ADG accurately aligns with the prompt, while CFG fails to do so. At high guidance weights, ADG remains stable, whereas CFG suffers from abnormal saturation and color distortion.}
    \vskip -0.2in
\end{figure*}

The Stable Diffusion series \cite{rombach2022high,esser2024scaling}, along with other latent-diffusion-based generative models \cite{ramesh2022hierarchical,flux2023}, has revolutionized text-to-image generation and related downstream tasks \cite{zhang2023adding,zhao2024uni,zhangunleashing}. These models operate by performing diffusion in the latent space, which is then decoded into the image space, effectively transforming textual descriptions into high-quality images with intricate details and compelling visual effects. A key component of this process is the Classifier-Free Guidance (CFG) mechanism \cite{ho2021classifier}, which enhances generation quality through linear extrapolation of the conditional score function with a guidance weight.

Despite its effectiveness, CFG introduces several challenges. Research has shown that increasing the guidance weight improves text alignment but at the cost of reduced sample diversity \cite{ho2021classifier}. More critically, when the guidance weight surpasses a certain threshold, the quality of the generated images deteriorates significantly, leading to a loss of detail and overall visual degradation. Theoretically, this phenomenon may be attributed to the non-commutativity between the tilting process, driven by the conditional likelihood, and the noise addition process \cite{Wu2024theoretical, chidambaram2024what}. This non-commutativity results in a deviation of the CFG-generated samples from the target distribution, manifesting as performance degradation, particularly at high guidance weights.

Several approaches have been proposed to mitigate these issues. Techniques such as variable weighting schemes and additional Langevin sampling iterations have been introduced to counteract the decline in image quality \cite{chung2024cfg++, xia2024rectified, bradley2024classifier, sadat2024eliminating}. While these methods have shown some efficacy in reducing quality degradation for specific tasks, they remain constrained by their reliance on a linear combination of conditional score functions. This linear framework limits their ability to maintain generation quality, highlighting the need for a more comprehensive solution.

In this paper, we first observe that higher guidance weights are associated with larger sample norms in latent space, leading to high image saturation and color distortion. We then provide a theoretical analysis of the CFG mechanism, demonstrating that the linear extrapolation of the score function inevitably causes norm amplification and anomalous diffusion behavior. Building on this analysis, we propose an Angle Domain Guidance (ADG) algorithm, inspired by the high-dimensional Gaussian assumption in latent space inherent in variational autoencoders. The ADG algorithm transforms the score function into an expectation over a time-dependent distribution, which is then enhanced in the angular domain before being re-mapped back into the score function for diffusion. Experimental results on the COCO dataset demonstrate that the ADG algorithm outperforms the baseline, particularly in generating images that are better aligned with textual descriptions, underscoring its potential to overcome the limitations of existing methods. The implementation is available at \href{https://github.com/jinc7461/ADG}{github.com/jinc7461/ADG}.

\section{Background}
\label{background}
\subsection{Conditional Diffusion Model}
\label{bg:DM}
Diffusion models are composed of two fundamental processes: a forward process that gradually transforms the target distribution into Gaussian noise, and a reverse process that reconstructs the target distribution from the noise distribution \cite{song2021score}. The forward process is governed by the following stochastic differential equation (SDE):

\begin{equation}
    \dd \x_t = \mathbf{f}(\x_t, t)\dd t + g(t)\dd \w, \quad \x_0 \sim p_0(\cdot | \con),
\end{equation}
where \( \w\) denotes the Wiener process, and \(p_0(\cdot | \con)\) represents the conditional target distribution, with \(\con\) denoting a condition (such as text in text-to-image models). Specifically, \(p_0(\cdot | \emptyset)\) represents the unconditional distribution. 

In this work, we adopt the widely used Variance Preserving (VP) SDE \cite{song2021score} for the forward process:
\begin{equation}
\label{vpsde}
    \dd \x_t = -\frac{\beta(t)}{2}\x_t \dd t + \sqrt{\beta(t)}\dd \w, \quad \x_0 \sim p_0(\cdot | \con),
\end{equation}
where the dynamics ensure that \(p_T\) approximates a standard Gaussian distribution for sufficiently large \(T\). The transition distribution of \(\x_t\) conditioned on \(\x_0\) is given by:
\begin{equation}
    \x_t | \x_0 \sim \Gauss(\sqrt{\bar{\alpha}_{t}} \x_0, \bar{\beta}_{t} \eye),
\end{equation}
where \(\bar{\alpha}_t = \exp\left(\int_0^t -\beta(\tau) \, {\rm d}\tau\right)\) and \(\bar{\beta}_t = 1 - \bar{\alpha}_t\).

The reverse process, which reconstructs the target distribution, corresponds to the time-reversed SDE:
\begin{equation}
\label{reverse:sde}
\dd \tilde{\x}_t = \left[-\frac{\beta(t)}{2}\tilde{\x}_t - \beta(t) \nabla_{\tilde{\x}_t} \log p_t(\tilde{\x}_t | \con)\right] \dd t + \sqrt{\beta(t)}\dd \tilde{\w},
\end{equation}
where \( \tilde{\w}\) represents the reverse Wiener process, \(p_t\) is the distribution of \(\x_t\) in \eqref{vpsde}, and \(\nabla \log p_t\) is referred to as the \textit{score function}. Alternatively, the sampling process can be described by the following reverse-time ordinary differential equation (ODE):
\begin{equation}
\label{reverse:ode}
\dd \bar{\x}_t = \left[-\frac{\beta(t)}{2}\bar{\x}_t - \frac{\beta(t)}{2}\nabla_{\bar{\x}_t} \log p_t(\bar{\x}_t | \con)\right] \dd t,
\end{equation}
which eliminates the stochastic noise sampling during the process, making it more commonly used in practice.

According to classical results from probability theory~\cite{anderson1982reverse}, if the reverse-time process—either SDE or ODE—is initialized with \( q_T = p_T \), then the marginal distributions match at all times, i.e., \( q_t = p_t \) for all \( t \in [0, T] \). 
Consequently, if it is possible to sample from \(p_T\) and obtain the score function \(\nabla_{\x} \log p_t(\x | \con)\) for different times \(t\), one can generate samples from the target distribution \(p_0\) by discretizing either \eqref{reverse:sde} or \eqref{reverse:ode}.

Since the score function \(\nabla_{\x} \log p_t(\x | \con)\) is generally intractable and \(p_T\) is approximated as a standard Gaussian distribution, practical first-order discretization methods for \eqref{reverse:sde} and \eqref{reverse:ode} correspond to the DDPM sampler \cite{ho2020denoising} and DDIM sampler \cite{song2020denoising}, respectively:

\begin{align}
    \x_{t-\Delta t} &= \sqrt{\frac{\baraa_{t-\Delta t}}{\baraa_t}}\x_t + \left(1 - \frac{\baraa_t}{\baraa_{t-\Delta t}}\right)\s_\theta(\x_t, t, \con) \nonumber \\
    &\quad + \sqrt{1 - \frac{\baraa_t}{\baraa_{t-\Delta t}}}\boldsymbol{\eta},
\end{align}
\begin{equation}
    \x_{t-\Delta t} = \sqrt{\frac{\baraa_{t-\Delta t}}{\baraa_t}}\x_t + \frac{1}{2}\left(1 - \frac{\baraa_t}{\baraa_{t-\Delta t}}\right)\s_\theta(\x_t, t, \con),
\end{equation}

where \(\s_\theta(\x_t, t, \con)\) is a neural network trained to approximate the score function, \(\boldsymbol{\eta} \sim \mathcal{N}(\mathbf{0}, \mathbf{I})\), and \(\x_T \sim \mathcal{N}(\mathbf{0}, \mathbf{I})\).

To enhance the stability of neural network training, in practice, the network is trained to approximate \(-\sqrt{\bar{\beta}_t} \nabla\log p_t\) by \(\boldsymbol{\epsilon}_\theta\), rather than directly approximating \(\nabla\log p_t\) by \(\s_\theta\). This approach mitigates numerical instabilities and improves convergence during training.

\subsection{Classifier-Free Guidance}
\label{bg:CFG}
While the theoretical foundations of vanilla conditional sampling based on diffusion models are well-established, directly using \eqref{reverse:sde} or \eqref{reverse:ode} for generation often results in low-quality outputs \cite{ho2021classifier}. To address this limitation, the Classifier-Free Guidance (CFG) technique was introduced. CFG aims to sample from a modified distribution \(p_{0,\omega}(\x) \propto p_0(\x) p^\omega(\con | \x)\), which represents the unconditional data distribution tilted by the conditional likelihood. Using Bayes' theorem, this distribution can be reformulated as:
\begin{equation*}
    p_{0,\omega}(\x) \propto p_0^{1-\omega}(\x) p_0^\omega(\x | \con),
\end{equation*}
and the gradient of the log-probability becomes:
\begin{equation*}
    \nabla \log p_{0,\omega}(\x) = (1-\omega) \nabla \log p_0(\x) + \omega \nabla \log p_0(\x | \con).
\end{equation*}

Inspired by this formulation, the Classifier-Free Guidance (CFG) method replaces the score function \(\nabla_{\bar{\x}_t} \log p_t(\bar{\x}_t | \con)\) with a weighted combination of the unconditional and conditional score functions, denoted as \(\s_{\text{cfg}, \omega}(\bar{\x}_t, t, \con)\), defined as:
\begin{align}
    \s_{\text{cfg}, \omega}(\bar{\x}_t, t, \con) &= (1-\omega) \nabla_{\bar{\x}_t} \log p_t(\bar{\x}_t | \emptyset)  \notag\\
    &\quad+\omega \nabla_{\bar{\x}_t} \log p_t(\bar{\x}_t | \con),
\end{align}
in the reverse process. For the ODE sampler, this process can be expressed as:
\begin{equation}
\label{reverse:cfg}
    \dd \bar{\x}_t = \left[-\bar{\x}_t - \s_{\text{cfg}, \omega}(\bar{\x}_t, t, \con)\right] \dd t.
\end{equation}

In the above equation, \(\omega\), referred to as the guidance weight, quantifies the strength of the tilting toward the conditional distribution which is always bigger than 1. However, due to the non-commutativity of the tilting process with the forward process, for any \(t > 0\),
\begin{equation}
    \nabla \log p_{t,\omega}(\x) \neq \s_{\text{cfg},\omega}(\x, t, \con),
\end{equation}
where \(p_{t,\omega}(\x)\) is the distribution of \(\x_t\) in \eqref{vpsde} with \(p_0(\cdot | \con)\) replaced by \(p_{t,\omega}\). This discrepancy implies that CFG does not strictly correspond to a sampling process targeting \(p_{0,\omega}\) as the true target distribution.

\section{Revisiting Classifier-Free Guidance}
\label{revisit}

\subsection{Norm Amplification in CFG}
While Classifier-Free Guidance (CFG) effectively addresses the limitations of vanilla conditional diffusion models, it introduces notable side effects. Specifically, increasing the guidance weight has a significant impact on the generated samples: they exhibit larger \(\ell_2\)-norms, and the corresponding decoded images show higher saturation levels, often accompanied by distortion. Figure~\ref{fig:x_norm_comparison} illustrates this phenomenon, depicting how the \(\ell_2\)-norm of \(\x_0\) varies with changes in the guidance weight, and the relationship between the latent variable norm and image saturation.

This phenomenon can be understood through the interplay between the conditional and unconditional distributions. When CFG operates on a conditional probability distribution that aligns with the "surface" of the unconditional distribution, the generated samples tend to deviate further from the overall distribution, resulting in larger norms. To formalize this, we consider the case of a Gaussian mixture model:
\begin{equation}
\label{model:GMM}
    p_0(\x) = \sum_{c=1}^C \pi_c \mathcal{N}(\x | \boldsymbol{\mu}_c, \eye),
\end{equation}
where \(\pi_c\) denotes the mixing coefficient satisfying \(\sum_{c=1}^C \pi_c = 1\), and the conditional distribution is given by:
\[
p_0(\x | c) = \mathcal{N}(\x | \boldsymbol{\mu}_c, \eye).
\]

\begin{definition}[surface class]
\label{def:sur}
    A class \(c^*\) is a surface class if there exists a hyperplane defined by \(\boldsymbol{w}^\top \boldsymbol{x} + b = 0\) such that \(\boldsymbol{w}^\top \boldsymbol{\mu}_{c^*} + b = 0\), and for all \(c \neq c^*\), \((\boldsymbol{w}^\top \boldsymbol{\mu}_c + b) < 0\), indicating that \(\boldsymbol{\mu}_c\) lies strictly on the same side of the hyperplane and \(\boldsymbol{w}\) points to the outer side of the uncoditional distribution.
\end{definition}

We focus on a surface class \(c^*\), defined as a class whose mean \(\boldsymbol{\mu}_{c^*}\) lies at a vertex of the polytope formed by \(\{\boldsymbol{\mu}_c\}_{c=1}^C\). This concept is formalized in Definition~\ref{def:sur}.

\begin{figure}[H]
\vskip 0.2in
\centering
\subfigure[Latent Variable Norm with Different Guidance Weights]
{\includegraphics[width=0.9\columnwidth]{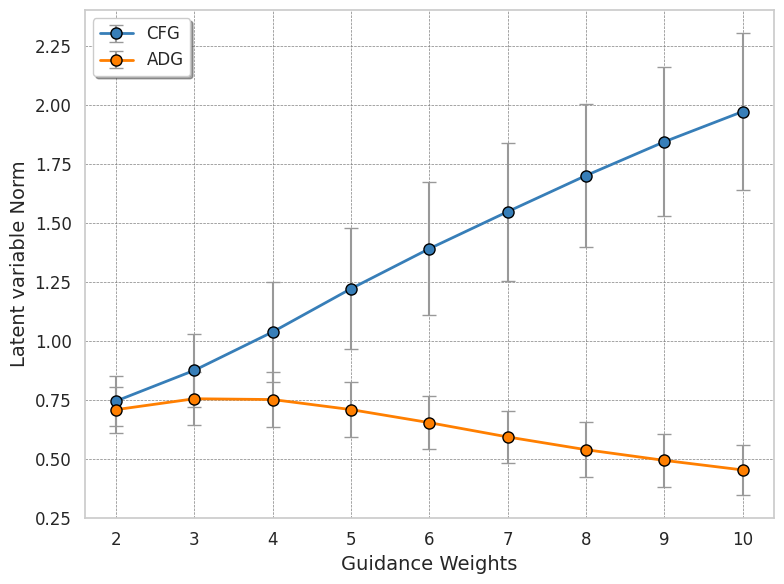}}
\subfigure[Relationship Between Latent Variable Norm and Image Saturation]
{\includegraphics[width=0.9\columnwidth]{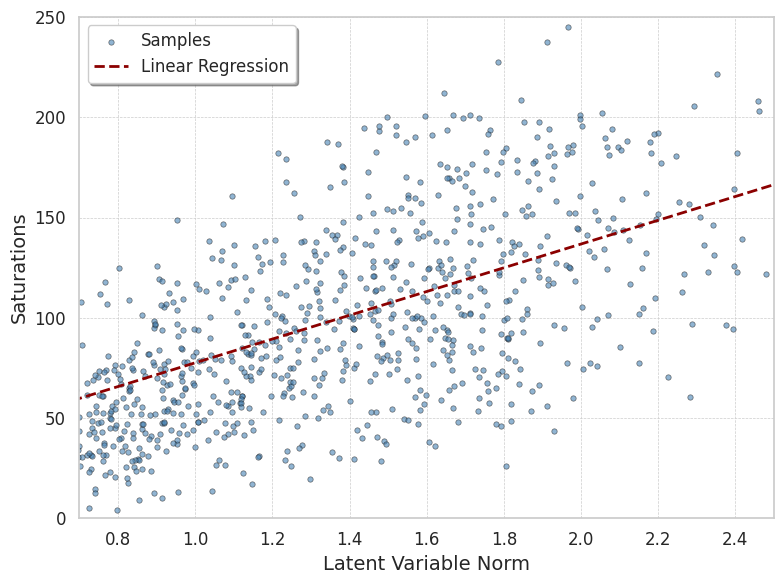}}
\caption{Average latent variable norm (\(\ell_2\)) and image saturation under different guidance weights using CFG with SD3.5 (d=38). Prompts are randomly selected from 100 samples in the COCO dataset. As the guidance weight increases, the norm of the hidden space samples generated by the CFG algorithm increases, which leads to oversaturation of the corresponding images.}
\label{fig:x_norm_comparison}
\vskip -0.2in
\end{figure}

Samples generated by CFG diffusion~\eqref{reverse:cfg} are more likely to move toward the outer regions of the distribution compared to samples generated by standard conditional diffusion~\eqref{reverse:ode}. This tendency is captured in Theorem~\ref{thm:pro}, which quantifies the norm amplification effect.

\begin{theorem}[Norm amplification]
\label{thm:pro}
    For the same initial point \(\x_T\), let \(\hat{\x}_0\) denote the sample obtained using the CFG sampler~\eqref{reverse:cfg}, and \(\x_0\) denote the sample obtained using the ODE sampler~\eqref{reverse:ode}. Then:
    \begin{equation}
        \boldsymbol{w}^{\top}\hat{\x}_0 > \boldsymbol{w}^{\top}\x_0,
    \end{equation}
    indicating that CFG-sampled points align more strongly with the outer direction defined by \(\boldsymbol{w}\). 
\end{theorem}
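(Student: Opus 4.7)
The plan is to exploit the explicit Gaussian-mixture structure of $p_0$ so that both reverse ODEs can be analyzed in closed form. Under the VP-SDE~\eqref{vpsde}, the conditional marginal is $p_t(\x\mid c^*)=\Gauss(\sqrt{\baraa_t}\boldsymbol{\mu}_{c^*},\eye)$ and the mixture marginal is $p_t(\x)=\sum_c\pi_c\Gauss(\sqrt{\baraa_t}\boldsymbol{\mu}_c,\eye)$, so the two scores admit the closed forms $\nabla\log p_t(\x\mid c^*)=-\x+\sqrt{\baraa_t}\boldsymbol{\mu}_{c^*}$ and $\nabla\log p_t(\x)=-\x+\sqrt{\baraa_t}\bar{\boldsymbol{\mu}}_t(\x)$, where $\bar{\boldsymbol{\mu}}_t(\x):=\sum_c\tilde\pi_c(\x,t)\boldsymbol{\mu}_c$ and $\tilde\pi_c(\x,t)\propto\pi_c\Gauss(\x\mid\sqrt{\baraa_t}\boldsymbol{\mu}_c,\eye)$ are the Gaussian-softmax posterior weights.

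Substituting the conditional score into~\eqref{reverse:ode} produces a decisive cancellation: the $-\x$ piece of the score exactly cancels the leading $-\tfrac{\beta(t)}{2}\x_t$ drift, leaving
\begin{equation*}
\frac{d\x_t}{dt}=-\tfrac{\beta(t)}{2}\sqrt{\baraa_t}\,\boldsymbol{\mu}_{c^*},
\end{equation*}
which is independent of the trajectory. Performing the same substitution in~\eqref{reverse:cfg} gives
\begin{equation*}
\frac{d\hat{\x}_t}{dt}=-\tfrac{\beta(t)}{2}\sqrt{\baraa_t}\bigl[\boldsymbol{\mu}_{c^*}+(\omega-1)\bigl(\boldsymbol{\mu}_{c^*}-\bar{\boldsymbol{\mu}}_t(\hat{\x}_t)\bigr)\bigr].
\end{equation*}
Subtracting and projecting onto $\w$, the scalar $y_t:=\w^{\top}(\hat{\x}_t-\x_t)$ obeys
\begin{equation*}
\frac{d y_t}{dt}=-\tfrac{\beta(t)}{2}\sqrt{\baraa_t}(\omega-1)\sum_{c\neq c^*}\tilde\pi_c(\hat{\x}_t,t)\,\w^{\top}(\boldsymbol{\mu}_{c^*}-\boldsymbol{\mu}_c).
\end{equation*}

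Next I would invoke Definition~\ref{def:sur}: for a surface class $\w^{\top}(\boldsymbol{\mu}_{c^*}-\boldsymbol{\mu}_c)>0$ for every $c\neq c^*$, and because Gaussian densities are pointwise strictly positive, each posterior weight $\tilde\pi_c(\hat{\x}_t,t)>0$. For $\omega>1$ every summand is then strictly positive, so $dy_t/dt<0$ along the entire reverse trajectory. Both samplers share the initial condition at time $T$, so $y_T=0$, and integrating backward in time yields $y_0=-\int_0^T \frac{d y_t}{dt}\,dt>0$, which is the claimed inequality.

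The step that really makes the argument clean is the trajectory-independence of the standard conditional drift, which is special to the Gaussian-mixture-plus-VP-SDE setting; it is what removes the need to compare the scores at different points $\hat{\x}_t\neq\x_t$. Without that cancellation one would have to bound how the two trajectories diverge along the flow — typically via a Gr\"{o}nwall argument on the Jacobian of the score — and the proof would require substantially more care; this is where I expect any generalization beyond the GMM setup of Theorem~\ref{thm:pro} to be the genuine obstacle.
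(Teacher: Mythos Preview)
Your argument is correct and matches the paper's approach essentially line for line: compute the closed-form GMM scores, observe the $-\x$ cancellation that makes the conditional drift trajectory-independent, project onto $\w$, and use the surface-class inequality $\w^\top(\boldsymbol{\mu}_{c^*}-\boldsymbol{\mu}_c)>0$ to conclude. The only cosmetic difference is that the paper phrases the last step as an ODE comparison theorem while you integrate the difference $y_t$ directly---these are equivalent here precisely because of the trajectory-independence you highlighted.
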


The result highlights how CFG encourages samples to move toward outer regions, with their \(\ell_2\)-norms growing as the guidance weight increases. Figure~\ref{fig:gaussian_mixture_sampling} visualizes this behavior, showing how samples deviate from the data distribution under different guidance weights.

\subsection{Anomalous Diffusion in CFG}

Building on Theorem~\ref{thm:pro}, we further investigate the geometric and probabilistic properties of CFG sampling in the latent space. Specifically, we define a family of time-dependent latent manifolds \(\mathcal{M}_t\) as:
\begin{equation}
    \mathcal{M}_t = \{\x \mid \s_{\text{cfg}, \omega}^\top(\x,t,c^*) \nabla \log p_t(\x | c^*) \leq 0\}.
\end{equation}

Regions defined by \(\mathcal{M}_t\) exhibit anomalous diffusion, where the update direction of sampled points tends toward areas with lower probability density. 

\begin{theorem}[Anomalous diffusion]
\label{thm:co}
    For the unit normal vector \(\boldsymbol{w}\) of the hyperplane in Definition~\ref{def:sur}, there exists a constant \(C_1 > 0\), dependent on \(p_0\), \(\omega\), and \(t\), such that:
\begin{equation}
    \bar{\alpha}_t\boldsymbol{\mu}_{c^*} + k\boldsymbol{w} \in \mathcal{M}_t, \quad \forall k \in (0, C_1].
\end{equation}
Furthermore, the constant \(C_1\) increases as the guidance weight $\omega$ become larger.
\end{theorem}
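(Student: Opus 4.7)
The plan is to expand the inner product defining $\mathcal{M}_t$ in the GMM setting of \eqref{model:GMM} and identify the range of $k$ on which the sign is nonpositive. Under the VP kernel each conditional $p_t(\cdot \mid c)$ is a unit-covariance Gaussian with some time-$t$ mean $\boldsymbol{m}_{c,t}$; on the ray through $\boldsymbol{m}_{c^*,t}$ in direction $\boldsymbol{w}$, writing $\x_k := \boldsymbol{m}_{c^*,t} + k\boldsymbol{w}$, one obtains the clean linear form $\nabla\log p_t(\x_k \mid c^*) = -k\boldsymbol{w}$. The shift between the statement's $\bar{\alpha}_t\boldsymbol{\mu}_{c^*}$ and $\boldsymbol{m}_{c^*,t}$ only redefines the origin along the ray and does not affect the sign argument. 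Let $g(k) := -\boldsymbol{w}^{\top}\nabla\log p_t(\x_k \mid \emptyset)$ denote the outward magnitude of the unconditional score at $\x_k$.

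With these pieces in place, the CFG inner product decomposes as
\begin{equation*}
\s_{\text{cfg},\omega}^{\top}\nabla\log p_t(\x_k \mid c^*) = (1-\omega)\langle\nabla\log p_t(\x_k \mid \emptyset),\,-k\boldsymbol{w}\rangle + \omega\,k^2 = -(\omega-1)\,k\,g(k) + \omega\,k^2.
\end{equation*}
Hence $\x_k \in \mathcal{M}_t$ is equivalent to the scalar inequality $k \le \tfrac{\omega - 1}{\omega}\,g(k)$, and I would take $C_1$ to be the largest value such that this inequality holds for every $k' \in (0, C_1]$.

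The key technical step is to show $g(0) > 0$. Using the posterior-weighted form $\nabla\log p_t(\x \mid \emptyset) = \sum_{c} P(c \mid \x)\,\nabla\log p_t(\x \mid c)$, projecting along $\boldsymbol{w}$, and invoking Definition~\ref{def:sur}, each summand has $\boldsymbol{w}^{\top}\boldsymbol{\mu}_c + b \le 0$ with strict inequality for every $c \neq c^*$. Since the posterior at $\boldsymbol{m}_{c^*,t}$ assigns strictly positive mass to at least one $c \neq c^*$ (all $\pi_c$ being positive and every $q_c$ strictly positive everywhere), the resulting weighted sum projected onto $\boldsymbol{w}$ is strictly negative, which gives $g(0) > 0$. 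Continuity of $g$ then provides an interval $(0, C_1]$ on which the fixed-point inequality holds, so $C_1 > 0$ exists.

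Monotonicity of $C_1$ in $\omega$ is then immediate: the threshold map $k \mapsto \tfrac{\omega-1}{\omega}g(k)$ is pointwise increasing in $\omega$, so its first crossing with the identity $k \mapsto k$ shifts to the right as $\omega$ grows. The main obstacle I anticipate is the implicit $k$-dependence of $g$, since $g$ could in principle decrease as the posterior concentrates on $c^*$ when moving outward; however, only continuity and strict positivity of $g$ at $k = 0$ are needed for existence of $C_1$, so no quantitative decay estimate on $g$ is required.
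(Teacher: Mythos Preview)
Your proposal is correct and follows the same underlying strategy as the paper: evaluate the scores along the ray $\x_k = \sqrt{\bar\alpha_t}\,\boldsymbol{\mu}_{c^*} + k\boldsymbol{w}$ (where $\nabla\log p_t(\x_k\mid c^*) = -k\boldsymbol{w}$) and reduce membership in $\mathcal{M}_t$ to a scalar inequality in $k$. The paper decomposes $\s_{\text{cfg},\omega}$ as $\nabla\log p_t(\cdot\mid c^*) + (\omega-1)\bigl(\nabla\log p_t(\cdot\mid c^*) - \nabla\log p_t(\cdot\mid\emptyset)\bigr)$ and obtains $k^2 - (\omega-1)kh(k)$ with $h$ a positive posterior-weighted sum; you keep the raw weighted form and obtain $\omega k^2 - (\omega-1)kg(k)$. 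These are equivalent since $g(k) = k + h(k)$. The substantive difference is in how $C_1 > 0$ is certified: the paper derives explicit uniform lower bounds on the posterior mass of the non-$c^*$ components over a bounded $k$-range, producing a concrete formula for $C_1$ that is visibly linear in $\omega - 1$; you instead observe $g(0) > 0$ and invoke continuity, which is shorter but non-constructive. Your monotonicity argument via the pointwise increase of $k \mapsto \tfrac{\omega-1}{\omega}g(k)$ in $\omega$ is cleaner than reading it off the explicit constant.

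One small caveat: your claim that the discrepancy between the statement's $\bar\alpha_t\boldsymbol{\mu}_{c^*}$ and the actual mean $\boldsymbol{m}_{c^*,t} = \sqrt{\bar\alpha_t}\,\boldsymbol{\mu}_{c^*}$ is merely a shift ``along the ray'' is not correct in general, since the offset $(\bar\alpha_t - \sqrt{\bar\alpha_t})\boldsymbol{\mu}_{c^*}$ need not be parallel to $\boldsymbol{w}$. This is a typo in the theorem statement; the paper's own proof in the appendix works on the ray through $\sqrt{\bar\alpha_t}\,\boldsymbol{\mu}_{c^*}$, so your choice to compute there is the right one.
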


Theorem~\ref{thm:co} establishes that CFG sampling induces anomalous diffusion phenomena within the neighborhood extending beyond the boundaries of the conditional distribution. As the guidance weight \(\omega\) increases, the anomalous regions defined by \(\mathcal{M}_t\) expand, amplifying the deviation of CFG-generated samples from the original conditional distribution.

\begin{figure*}[ht]
\vskip 0.2in
    \centering
    \subfigure[\(\omega = 1\)]{
    \label{fig:gaussian_mixture_sampling:1}
    \includegraphics[width=0.3\textwidth]{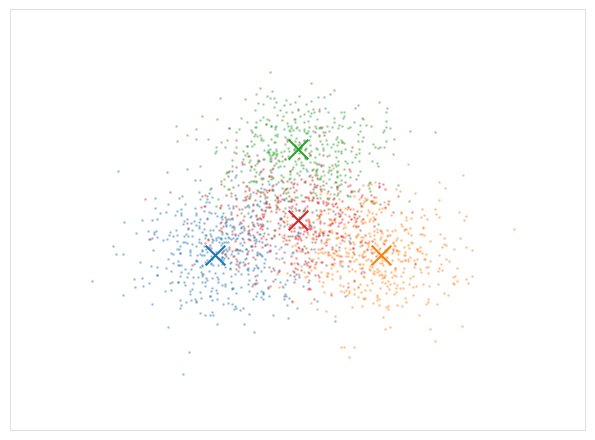}
    }
    \subfigure[\(\omega = 3\)]{
    \label{fig:gaussian_mixture_sampling:2}
    \includegraphics[width=0.3\textwidth]{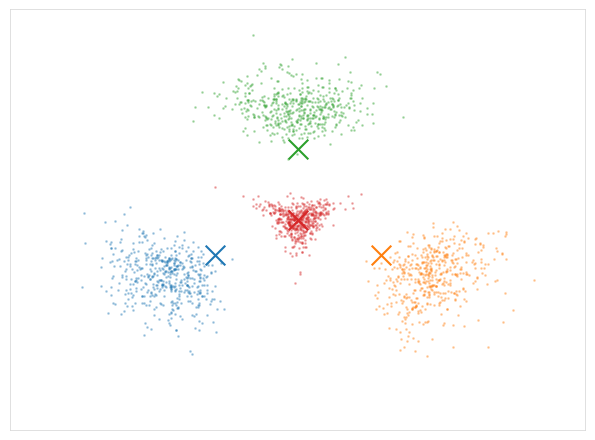}
    }
    \subfigure[\(\omega = 5\)]{
    \label{fig:gaussian_mixture_sampling:3}
    \includegraphics[width=0.3\textwidth]{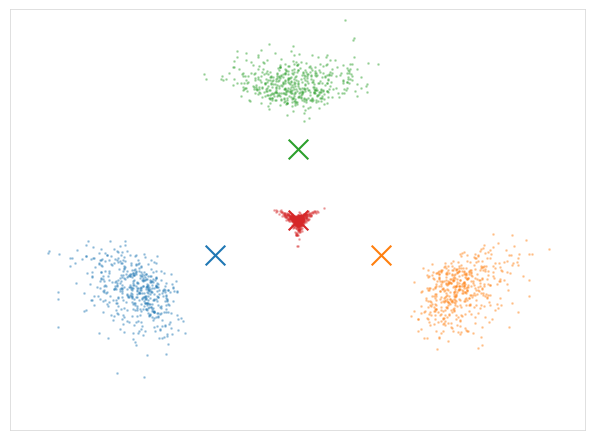}
    }
    \caption{Generated sample distributions of a four-component Gaussian mixture model under varying guidance weights. True component means are marked, and samples corresponding to surface classes move further from the distribution as \(\omega\) increases.}
    \label{fig:gaussian_mixture_sampling}
    \vskip -0.2in
\end{figure*}

To provide theoretical insight while maintaining conciseness, we outline the proof of Theorem~\ref{thm:co}. First, we characterize the score function via Lemma~\ref{lemma:1}.

\begin{lemma}[Lemma 1 of \cite{huang2023reverse}]
\label{lemma:1}
For any \(t \in (0, T]\), the score function is given by:
\begin{align}
    \nabla_\x \log p_t(\x | \con) =
    \frac{\sqrt{\bar{\alpha}_t} \hat{\x}^{(\con)}_0 - \x}
    {\bar{\beta_t}},
\end{align}
where 
\begin{align*}
    \hat{\x}^{(\con)}_0&=\mathbb{E}_{\x_0 \sim q_{\con, t, \x}}[\x_0],\\
    q_{\con, t, \x}(\x_0) &\propto p_0(\x_0 | \con) \exp\left(-\frac{\lVert \x - \sqrt{\bar{\alpha}_t} \x_0 \rVert_2^2}{2\bar{\beta_t}}\right).
\end{align*}
\end{lemma}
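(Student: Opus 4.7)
The plan is to derive this as a straightforward application of Tweedie's formula (a Bayesian posterior identity) to the Gaussian transition kernel $p_t(\x\mid \x_0)=\mathcal{N}(\x;\sqrt{\bar\alpha_t}\x_0,\bar\beta_t\eye)$ associated with the VP-SDE \eqref{vpsde}. First I would write the marginal as a mixture over the clean data conditioned on $\con$:
\begin{equation*}
p_t(\x\mid\con)=\int p_t(\x\mid\x_0)\,p_0(\x_0\mid\con)\,\dd\x_0,
\end{equation*}
using the key observation that the forward noising process is independent of $\con$ given $\x_0$, so $p_t(\x\mid\x_0,\con)=p_t(\x\mid\x_0)$.

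Next I would compute $\nabla_\x\log p_t(\x\mid\con)=\nabla_\x p_t(\x\mid\con)/p_t(\x\mid\con)$. Because $p_t(\x\mid\x_0)$ is Gaussian in $\x$, its gradient with respect to $\x$ is $-(\x-\sqrt{\bar\alpha_t}\x_0)/\bar\beta_t\cdot p_t(\x\mid\x_0)$. After interchanging differentiation and integration (justified by dominated convergence under the usual mild regularity on $p_0(\cdot\mid\con)$), I obtain
\begin{equation*}
\nabla_\x\log p_t(\x\mid\con)=\frac{\int \tfrac{\sqrt{\bar\alpha_t}\x_0-\x}{\bar\beta_t}\,p_t(\x\mid\x_0)\,p_0(\x_0\mid\con)\,\dd\x_0}{\int p_t(\x\mid\x_0)\,p_0(\x_0\mid\con)\,\dd\x_0}.
\end{equation*}

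Finally, I would recognize the normalized weight $p_t(\x\mid\x_0)p_0(\x_0\mid\con)/p_t(\x\mid\con)$ as the posterior density of $\x_0$ given the noisy observation $\x$ and the condition $\con$, which by Bayes' rule is exactly $q_{\con,t,\x}(\x_0)\propto p_0(\x_0\mid\con)\exp\bigl(-\lVert\x-\sqrt{\bar\alpha_t}\x_0\rVert_2^2/(2\bar\beta_t)\bigr)$. Pulling the constant $(\sqrt{\bar\alpha_t}/\bar\beta_t)$ out of the expectation and using $\hat\x_0^{(\con)}=\mathbb{E}_{\x_0\sim q_{\con,t,\x}}[\x_0]$ yields the claimed identity $(\sqrt{\bar\alpha_t}\hat\x_0^{(\con)}-\x)/\bar\beta_t$.

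There is no real obstacle here: the entire argument is a two-line manipulation of the log-derivative under the integral sign plus the Gaussianity of the transition kernel. The only item worth mentioning is the mild regularity needed to swap $\nabla_\x$ with $\int\dd\x_0$; this follows because the integrand decays as a Gaussian in $\x_0$ so its partial with respect to any coordinate of $\x$ is dominated by an integrable envelope, uniformly in a neighborhood of $\x$.
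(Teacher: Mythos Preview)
Your argument is correct and is the standard Tweedie's-formula derivation of the posterior-mean representation of the score. Note, however, that the paper does not give its own proof of this lemma: it is simply quoted as Lemma~1 of \cite{huang2023reverse} and then used as a tool in the proof of Theorem~\ref{thm:co}, so there is no in-paper proof to compare against; your derivation is exactly the kind of short computation one would expect the cited reference to contain.
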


Lemma \ref{lemma:1} shows that the score function can be characterized by the expectation of the conditional distribution tilted by a time-dependent Gaussian distribution. Using this result, \(\s_{\text{cfg}, \omega}\) can be rewritten as:
\begin{align}
    \s_{\text{cfg}, \omega} &=(\omega - 1)\frac{\bar{\alpha}_t(\hat{\x}^{(\con)}_0 - \hat{\x}^{(\emptyset)}_0)}
    {\barbb_t}\notag
     \\
    &\quad + \frac{\bar{\alpha}_t \hat{\x}^{(\emptyset)}_0 - \x}
    {\barbb_t}.\label{eq:scfg}
\end{align}

The distinction between \(\s_{\text{cfg}, \omega}\) and \(\nabla_\x \log p_t(\x | c^*)\) lies in the first term of \eqref{eq:scfg}. For a point \(\x = \bar{\alpha}_t \boldsymbol{\mu}_{c^*} + k \boldsymbol{w}\) with sufficiently small \(k\), we find:
\begin{equation}
\label{eq:neg}
    \boldsymbol{w}^{\top}(\hat{\x}^{(\con)}_0 - \hat{\x}^{(\emptyset)}_0) > \epsilon,
\end{equation}
where \(\epsilon > 0\). Further, noting that \(\nabla_\x \log p_t(\x | c^*) = -k\boldsymbol{w}\), we have:
\begin{equation}
    \s_{\text{cfg}, \omega}^\top(\x,t,c^*) \nabla \log p_t(\x | c^*) < -\epsilon k + k^2,
\end{equation}
thereby proving the original proposition.

From the analysis, it can be seen that the CFG method is equivalent to enhancing \(\hat{\x}^{(\con)}_0\) to \(\hat{\x}^{(\con)}_0 + (\omega - 1)(\hat{\x}^{(\con)}_0 - \hat{\x}^{(\emptyset)}_0)\). While this enhancement effectively amplifies features that align with the target distribution, it unintentionally leads to excessively large norms, causing oversaturation and distortion artifacts when the guidance weight is high.

\section{Angle-Domain Guidance Sampling (ADG)}

Building upon the insights presented in Section~\ref{revisit}, we identify Classifier-Free Guidance (CFG) as an operation that reinforces \(\hat{\x}_0\) linearly, leading to generated samples with excessively high norms. While CFG effectively captures features aligned with the target distribution, it also inflates sample magnitudes, resulting in oversaturation, artifacts, and unrealistic images at high guidance weights. 
To mitigate these issues, we propose the Angle-Domain Guidance Sampling (ADG) method, which focuses on the directional alignment of samples with the target while limiting changes in their magnitudes.

\subsection{Motivation for ADG}
The primary motivation underlying ADG is that the directional information encoded in \(\hat{\x}_0\) is often sufficient to guide samples toward the target distribution. Latent space diffusion models are typically based on the assumption of a high-dimensional isotropic Gaussian inherent in variational autoencoders \cite{kingma2013auto}, which concentrates around a spherical shell with a fixed radius \cite{wainwright2019high}. Although due to limitations in network capacity, training data, and other factors, the latent space distribution cannot be fully modeled by a high-dimensional Gaussian, this assumption still offers valuable insights. Specifically, CFG can be seen as enhancing \(\hat{\x}_0\) in the linear domain, while emphasizing the differences in both the angle and magnitude between \(\hat{\x}^{(\con)}_0\) and \(\hat{\x}^{(\emptyset)}_0\). 
While norm adjustments may marginally improve semantic alignment, excessive norm amplification at high guidance weights becomes detrimental, resulting in oversaturation and distortion. Therefore, ADG shifts the focus to differences in the angular domain, while constraining variations in the magnitude domain.  

\begin{figure*}[ht]
\vskip 0.2in
    \centering
    \includegraphics[width=0.9\textwidth]{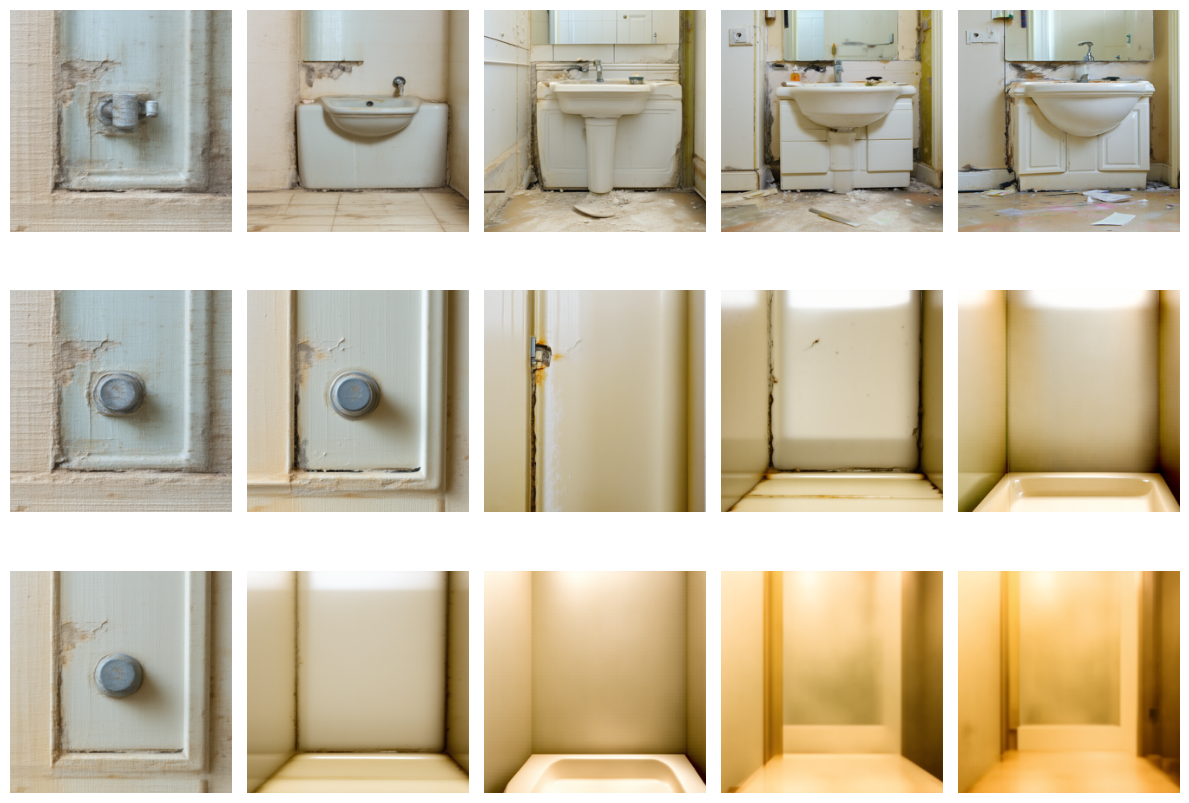}
    \caption{Comparison of generated images using different guidance strategies for the prompt “\textbf{\textit{Part of a small bathroom in need of repair.}}” The rows correspond to different guidance algorithms: ADG (top), CFG (middle), and CFG++ (bottom). The columns represent increasing guidance weights: 2, 4, 6, 8, and 10 (from left to right).}
    \label{fig:exp1}
    \vskip -0.2in
\end{figure*}

\subsection{Methodology}

The proposed ADG algorithm is comprehensively described in Algorithm~\ref{alg:adg}. Figure~\ref{fig:adg_vs_cfg} illustrates the distinct behaviors of \(\hat{\x}_0\) under ADG compared to CFG, emphasizing the enhanced stability and directional control achieved by ADG. Unlike CFG, which often leads to exaggerated magnitudes in \(\hat{\x}_0\), ADG introduces a refined mechanism that prioritizes alignment with direction. 
Moreover, Proposition~\ref{pro:norm} formalizes the constraints imposed by ADG on the magnitude of \(\hat{\x}_0\). These constraints ensure a balanced trade-off between directional guidance and structural integrity, resulting in improved sample quality. 
\begin{algorithm}[H]
   \caption{Angle-Domain Guidance Sampling (ADG)}
   \label{alg:adg}
\begin{algorithmic}
   \STATE {\bfseries Require:} $\x_T\sim\Gauss(0,\mathbf I)$,$1<\omega\in \mathbb R$, Decoder $\mathcal D$
   \FOR{$t=T$ {\bfseries to} $1$}
   \STATE $\hat{\x}^{(\con)}_0=(\x_t-\sqrt{1-\bar{\alpha}_t}\mathbf{\epsilon}_\theta(\x,t,\con))/\sqrt{\bar{\alpha}_t}$
   \STATE $\hat{\x}^{(\emptyset)}_0=(\x_t-\sqrt{1-\bar{\alpha}_t}\mathbf{\epsilon}_\theta(\x,t,\emptyset))/\sqrt{\bar{\alpha}_t}$
   \STATE $\gamma = \arccos\left(\frac{(\hat{\x}^{(\emptyset)}_0)^\top\hat{\x}^{(\con)}_0}
   {\lVert\hat{\x}^{(\emptyset)}_0 \rVert_2\lVert\hat{\x}^{(\con)}_0 \rVert_2}\right)$
   \STATE $\gamma_{\omega}=\text{threshold}((\omega-1)\gamma,\pi/3)$
   \STATE $\hat{\x}_{0,\omega}=\cos(\gamma_\omega) \hat{\x}^{(\con)}_0+\frac{\sin(\gamma_\omega)}{\sin(\gamma)} (\hat{\x}^{(\con)}_0-\text{proj}_{\hat{\x}^{(\emptyset)}_0}(\hat{\x}^{(\con)}_0))$
   \STATE $\x_{t-1}=\sqrt{\bar{\alpha}_{t-1}}\hat{\x}_{0,\omega}+\sqrt{1-\bar{\alpha}_{t-1}}\frac{\x_t-\bar{\alpha}_{t}\hat{\x}_{0,\omega}}{\sqrt{1-\bar{\alpha}_t}}$
   \ENDFOR
   \STATE $I=\mathcal D(\x_0)$
\end{algorithmic}
\end{algorithm}

\begin{proposition}
\label{pro:norm}
For any \(t \in (0, T]\), \(\hat{\x}_{0,\omega}\) and \(\hat{\x}_0^{(\con)}\) defined in Algorithm~\ref{alg:adg} satisfy the following property:
\begin{equation}
\lVert\hat{\x}_{0,\omega}\rVert_2 \leq \sqrt{2} \lVert\hat{\x}_0^{(\con)}\rVert_2.
\end{equation}
\end{proposition}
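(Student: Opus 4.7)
The plan is to reduce the norm computation to a two-dimensional orthonormal basis and then simplify via a standard trigonometric identity. The key observation is that the update formula only involves two vectors, $\hat{\x}_0^{(\con)}$ and $\hat{\x}_0^{(\emptyset)}$, so $\hat{\x}_{0,\omega}$ lives in the plane they span. This means I only need to track two coordinates.

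First I would introduce an orthonormal basis $\{\hat{u}, \hat{v}\}$ of this plane, with $\hat{u} = \hat{\x}_0^{(\emptyset)} / \lVert\hat{\x}_0^{(\emptyset)}\rVert_2$ and $\hat{v}$ the unit vector in the direction of the orthogonal residual $\hat{\x}_0^{(\con)} - \text{proj}_{\hat{\x}_0^{(\emptyset)}}(\hat{\x}_0^{(\con)})$. By the definition of $\gamma$ in Algorithm~\ref{alg:adg}, this yields the clean decomposition
\begin{equation*}
\hat{\x}_0^{(\con)} = \lVert\hat{\x}_0^{(\con)}\rVert_2\,\bigl(\cos(\gamma)\,\hat{u} + \sin(\gamma)\,\hat{v}\bigr), \qquad \hat{\x}_0^{(\con)} - \text{proj}_{\hat{\x}_0^{(\emptyset)}}(\hat{\x}_0^{(\con)}) = \lVert\hat{\x}_0^{(\con)}\rVert_2 \sin(\gamma)\,\hat{v}.
\end{equation*}
Substituting these into the ADG update expression, the factor of $\sin(\gamma)$ in the denominator cancels, and I obtain
\begin{equation*}
\hat{\x}_{0,\omega} = \lVert\hat{\x}_0^{(\con)}\rVert_2\,\Bigl[\cos(\gamma_\omega)\cos(\gamma)\,\hat{u} + \bigl(\cos(\gamma_\omega)\sin(\gamma) + \sin(\gamma_\omega)\bigr)\,\hat{v}\Bigr].
\end{equation*}

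Next I would take squared norms, using orthonormality of $\{\hat{u},\hat{v}\}$. Expanding and collecting the $\cos^2(\gamma_\omega)\cos^2(\gamma) + \cos^2(\gamma_\omega)\sin^2(\gamma)$ terms into $\cos^2(\gamma_\omega)$, and combining with the remaining $\sin^2(\gamma_\omega)$, leaves only the cross term. This simplifies by the double-angle identity to
\begin{equation*}
\lVert\hat{\x}_{0,\omega}\rVert_2^2 = \lVert\hat{\x}_0^{(\con)}\rVert_2^2 \bigl(1 + \sin(2\gamma_\omega)\sin(\gamma)\bigr).
\end{equation*}

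The final step is to bound the parenthesized factor. Since $\gamma_\omega \in [0,\pi/3]$ by the $\text{threshold}$ cap (so $2\gamma_\omega \in [0,2\pi/3]$ giving $\sin(2\gamma_\omega)\in[0,1]$) and $\gamma \in [0,\pi]$ giving $\sin(\gamma)\in[0,1]$, the product is at most $1$, yielding $\lVert\hat{\x}_{0,\omega}\rVert_2^2 \leq 2\lVert\hat{\x}_0^{(\con)}\rVert_2^2$ and the claim follows by taking square roots. There is no real obstacle here: the only potentially delicate point is verifying that the ranges of $\gamma$ and $\gamma_\omega$ (guaranteed by the $\arccos$ convention and the threshold) keep both sines nonnegative, but this is immediate from the algorithm.
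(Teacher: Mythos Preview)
Your proof is correct and follows essentially the same approach as the paper's: both express $\hat{\x}_{0,\omega}$ as a linear combination of $\hat{\x}_0^{(\con)}$ and the (rescaled) orthogonal residual, expand the squared norm, and bound the resulting cross term using $\sin(2\gamma_\omega)\leq 1$ together with the fact that the inner-product contribution is at most $1$. Your explicit orthonormal-basis computation yields the exact identity $\lVert\hat{\x}_{0,\omega}\rVert_2^2=\lVert\hat{\x}_0^{(\con)}\rVert_2^2\bigl(1+\sin(2\gamma_\omega)\sin(\gamma)\bigr)$, whereas the paper reaches the same bound after applying Cauchy--Schwarz to $\langle\hat{\x}_{\text{asist}},\hat{\x}_0^{(\con)}\rangle$, but the two arguments are equivalent.
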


ADG demonstrates remarkable flexibility, making it adaptable to a wide range of sampling frameworks. It is compatible with advanced deterministic samplers, such as DPM-Solver~\cite{lu2022dpm,lu2022dpm+}, as well as stochastic samplers like DDPM. This versatility arises from the fact that all sampling algorithms can be expressed as weighted combinations of \(\x_t\), \(\hat{\x}_0\), and Gaussian noise. By replacing the original \(\hat{\x}_0\) with \(\hat{\x}_{0,\omega}\) derived from ADG, the method seamlessly generalizes across various diffusion-based approaches. Additionally, ADG extends naturally to flow-matching generative models~\cite{lipman2022flow}, which are mathematically equivalent to diffusion models with some assumptions\cite{patel2024exploring}. Detailed derivations of ADG’s extension to flow-matching models are provided in the appendix.

\begin{table*}
\caption{Results of 10 NFE generation with SD v3.5 (d=38) on COCO10k, best performance includes finer grained data.}
\label{tab:main}
\vskip 0.15in
\begin{center}
\begin{small}
\begin{sc}
\begin{tabular}{lccccccccc}
\toprule
\multirow{2}{*}{Method} & \multicolumn{3}{c}{$\omega = 2$} & \multicolumn{3}{c}{$\omega = 4$} & \multicolumn{3}{c}{$\omega = 6$} \\
\cmidrule(r){2-4} \cmidrule(r){5-7} \cmidrule(r){8-10}
& CLIP $\uparrow$ & IR $\uparrow$ & FID $\downarrow$ & CLIP $\uparrow$ & IR $\uparrow$ & FID $\downarrow$ & CLIP $\uparrow$ & IR $\uparrow$ & FID $\downarrow$ \\
\midrule
ADG (Proposed) & 0.315 & \textbf{0.727} & 17.4 & \textbf{0.319} & \textbf{0.928} & 17.1 & \textbf{0.321} & \textbf{0.964} & \textbf{16.9} \\
CFG & \textbf{0.316} & 0.711 & 17.5 & 0.318 & 0.835 & \textbf{16.9} & 0.317 & 0.586 & 17.1 \\
CFG++ ($\lambda=\omega/12.5$) & 0.315 & 0.574 & \textbf{17.1} & 0.306 & -0.148 & 18.0 & 0.282 & -0.980 & 19.8 \\
APG & 0.315 & 0.651 & 17.5 & 0.317 & 0.910 & 17.2 & 0.319 & \textbf{0.964} & \textbf{16.9} \\
\midrule

\multirow{2}{*}{} & \multicolumn{3}{c}{$\omega = 10$} & \multicolumn{3}{c}{$\omega = 15$} & \multicolumn{3}{c}{Best performance} \\
\cmidrule(r){2-4} \cmidrule(r){5-7} \cmidrule(r){8-10}
& CLIP $\uparrow$ & IR $\uparrow$ & FID $\downarrow$ & CLIP $\uparrow$ & IR $\uparrow$ & FID $\downarrow$ & CLIP $\uparrow$ & IR $\uparrow$ & FID $\downarrow$ \\
\midrule
ADG (Proposed) & \textbf{0.322} & \textbf{0.970} & 16.6 & \textbf{0.324} & \textbf{0.940} & 16.8 & \textbf{0.324} & \textbf{0.970} & 16.6 \\
CFG & 0.304 & -0.142 & 17.7 & 0.290 & -0.659 & 20.3 & 0.318 & 0.843 & 16.6 \\
CFG++($\lambda=\omega/12.5$) & 0.257 & -1.509 & 21.4 & 0.246 & -1.680 & 22.6 & 0.315 & 0.574 & 17.1 \\
APG & 0.318 & 0.935 & \textbf{16.4} & 0.316 & 0.691 & \textbf{16.1} & 0.319 & 0.966 & \textbf{16.1} \\
\bottomrule
\end{tabular}
\end{sc}
\end{small}
\end{center}
\vskip -0.1in
\end{table*}

\begin{figure}[ht]
\vskip 0.2in
    \centering
    \includegraphics[width=0.9\linewidth]{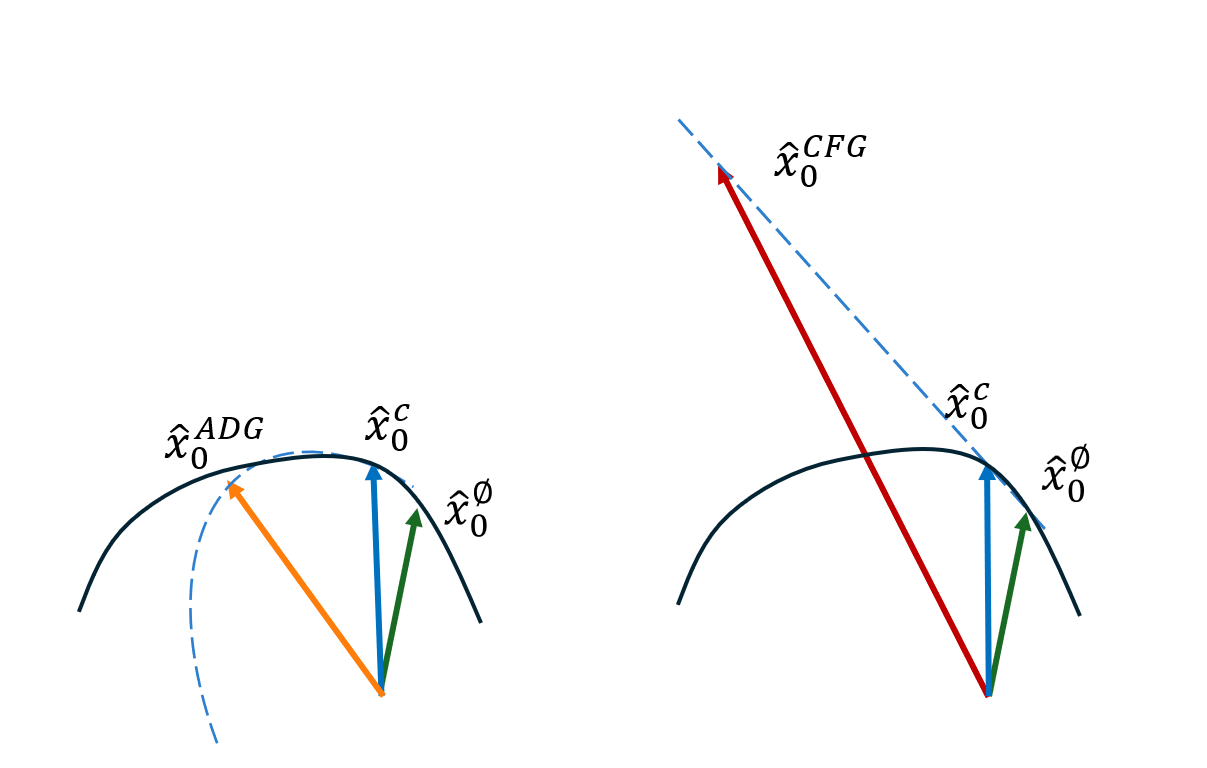}
    \caption{Comparison of angular domain guidance (ADG) and classifier-free guidance (CFG). The left diagram illustrates the ADG update, which preserves the latent variable norm by focusing on angular adjustments, while the right diagram shows the CFG, which amplifies the norm due to direct linear adjustments. The black line represents the potential manifold structure. }
    \label{fig:adg_vs_cfg}
    \vskip -0.2in
\end{figure}

\section{Experiment}
\label{EXP}

\subsection{Text-To-Image Task}
\label{T2I}

To evaluate the effectiveness of our proposed method, we conduct experiments using Stable Diffusion v3.5 (d=38) on the COCO dataset\cite{lin2014microsoft}, comparing against CFG~\cite{ho2021classifier}, CFG++~\cite{chung2024cfg++}, and APG~\cite{sadat2024eliminating}. For CFG++ and APG, we adopt the recommended parameter setting from the original papers. We assess sample quality using three metrics: FID~\cite{heusel2017gans}, CLIP score~\cite{radford2021learning}, and ImageReward~\cite{xu2024imagereward}. ImageReward is designed to align with human preferences, evaluating images based on text alignment, aesthetic appeal, and safety considerations. Quantitative results are summarized in Table~\ref{tab:main}, while Figure~\ref{fig:exp1} provides a qualitative comparison of generated images. To ensure a fair comparison, fine-grained experimental results for the baseline methods are included in the appendix. The "Best performance" column in Table~\ref{tab:main} aggregates these results.

Our experiments reveal several key advantages of ADG. First, ADG demonstrates robustness at high guidance weights, maintaining stable performance even under conditions where baselines exhibit significant color distortions and quality degradation. Second, ADG consistently outperforms baselines across all guidance weights in text-image consistency metrics (CLIP and IR) while remaining competitive in FID. Notably, ADG achieves substantial improvements in IR, indicating better alignment with human preferences. Finally, ADG exhibits a wide operational range, delivering superior generation performance across a broad spectrum of guidance weights. In contrast, baseline methods experience rapid performance decay beyond their optimal operating points, highlighting the limitations of these methods in comparison to ADG.

\subsection{Ablation Study}

\begin{figure*}[ht]
\vskip 0.2in
    \centering
    \subfigure[ADG]{
    \includegraphics[width=0.29\textwidth]{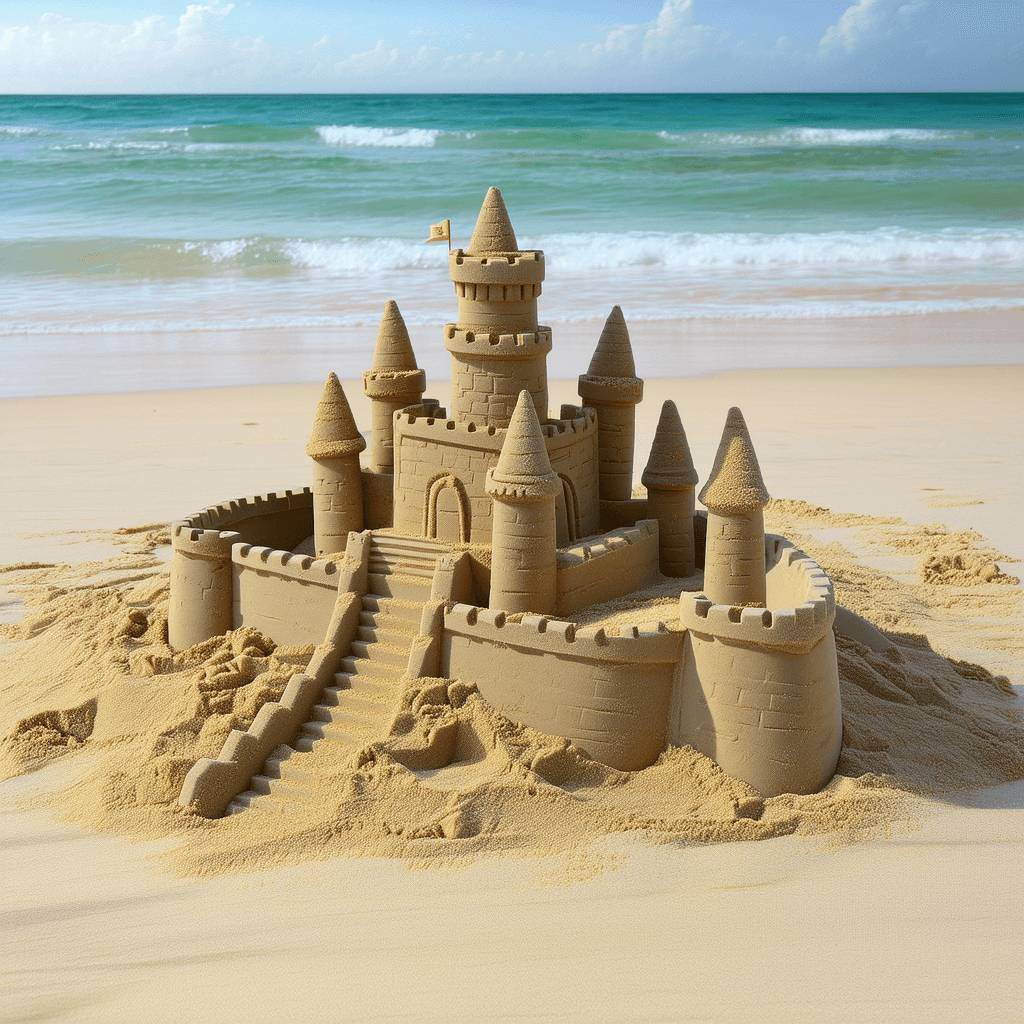}
    }
    \subfigure[ADG without angle constraint]{
    \includegraphics[width=0.29\textwidth]{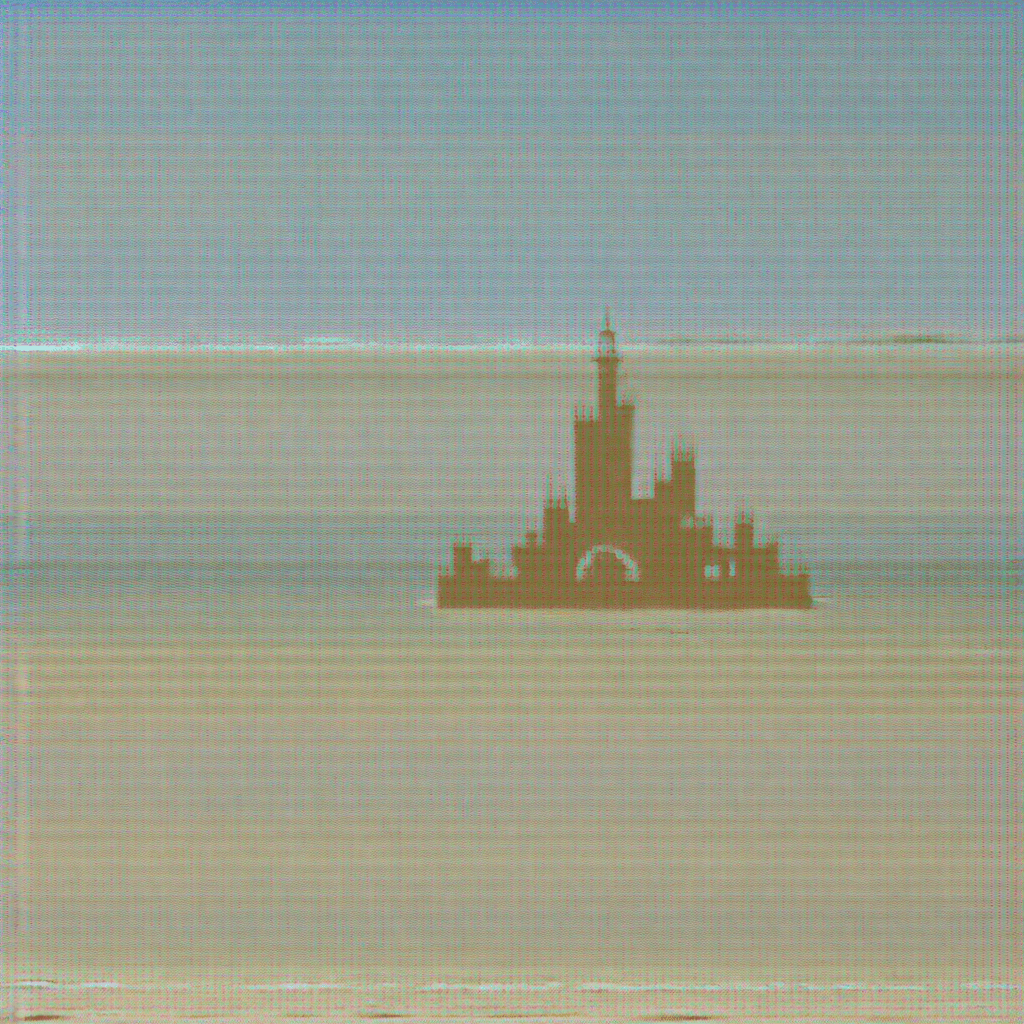}
    }
    \subfigure[ADG with normalization]{
    \includegraphics[width=0.29\textwidth]{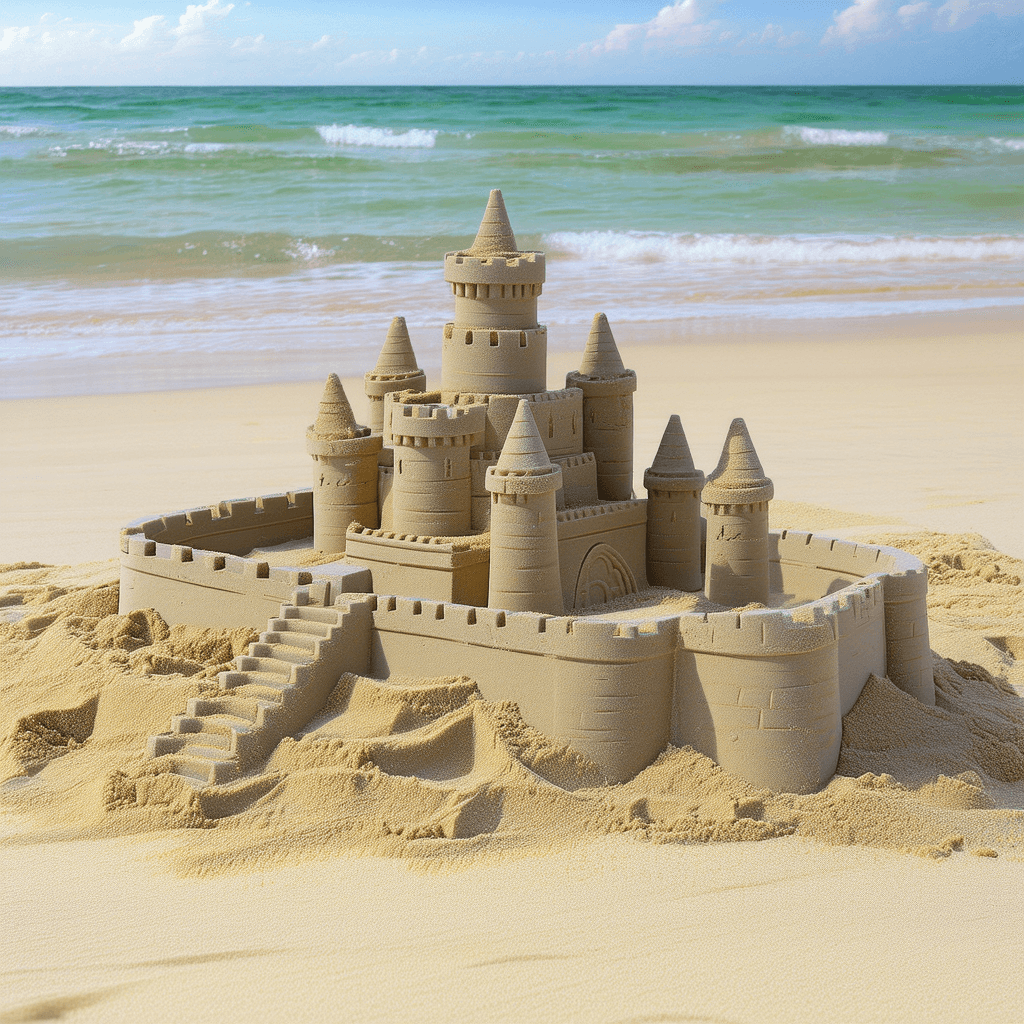}
    }
    \caption{Comparison of different ADG variants. ADG without angle constraint exhibits significant degradation, while ADG with normalization remains stable.}
    \label{fig:adgwoth_failures}
    \vskip -0.2in
\end{figure*}

We investigate two variants of the proposed ADG algorithm, described in Appendix as Algorithm~\ref{alg:adgwoth} and Algorithm~\ref{alg:adgwn}. The first variant removes the maximum turning angle constraint, while the second introduces normalization to the norm of $\hat{\x}_{0,\omega}$. Table~\ref{tab:adg_variants} presents the performance metrics for these variants under $\omega = 8$.

\begin{table}[ht]
\caption{Results of 10 NFE generation with SD v3.5 (d=38) on COCO10k under \(\omega=8\).}
\label{tab:adg_variants}
\vskip 0.15in
\centering
\begin{small}
\begin{tabular}{lccc}
\toprule
Method & CLIP $\uparrow$ & IR $\uparrow$ & FID $\downarrow$ \\
\midrule
ADG (Proposed) & \textbf{0.322} & \textbf{0.970} & 16.7 \\
ADG w/o angle constraint & 0.275 & -0.782 & 28.6 \\
ADG w normalization & \textbf{0.322} & 0.958 & \textbf{16.6} \\
\bottomrule
\end{tabular}
\end{small}
\vskip -0.15in
\end{table}

The experimental results indicate that normalization has a limited impact on ADG's performance. The comparable performance between the normalized ADG and the original framework demonstrates that the ADG algorithm effectively controls the norm of generated samples. This finding further confirms that angular domain adjustment, rather than norm adjustment, serves as the primary mechanism for enhancing text-image alignment.

Furthermore, the necessity of the maximum turning angle constraint is evident. Removing this constraint leads to a substantial decline in performance across all evaluation metrics. As illustrated in Figure~\ref{fig:adgwoth_failures}, the unconstrained variant exhibits severe instability, particularly when the angular deviation $(\omega-1)\gamma$ urpasses the critical threshold of $\pi$, leading to pathological guidance directions and ultimately catastrophic generation failures.

\subsection{Compatibility with High order Samplers}

To demonstrate the compatibility of our proposed Angular Domain Guidance (ADG) algorithm with different models and samplers, we conducted experiments with SD v2.1 using DPM-Solver sampler\cite{lu2022dpm}. As shown in Table~\ref{tab:sd21_dpm_solver}, ADG consistently outperforms CFG across all evaluation metrics, achieving higher CLIP scores and ImageReward while maintaining competitive FID performance. These results highlight ADG's seamless integration with high-order samplers and its ability to enhance performance across multiple dimensions. 

\begin{table}[H]
\caption{Results of 25 NFE generation with SD v2.1 on COCO 10k using DPM-Solver under \(\omega=8\).}
\label{tab:sd21_dpm_solver}
\vskip 0.1in
\begin{center}
\begin{small}
\begin{sc}
\begin{tabular}{lccc}
\toprule
Method & CLIP $\uparrow$ & IR $\uparrow$ & FID $\downarrow$ \\
\midrule
ADG (Proposed) & \textbf{0.322} & \textbf{0.467} & \textbf{16.6} \\
CFG & 0.313 & 0.399 & \textbf{16.6} \\
\bottomrule
\end{tabular}
\end{sc}
\end{small}
\end{center}
\vskip -0.1in
\end{table}

\section{Discussion}

Angle Domain Guidance (ADG) is introduced as an alternative to the widely adopted Classifier-Free Guidance (CFG) in latent-diffusion-based text-to-image (T2I) generation. ADG addresses critical limitations of CFG, including norm amplification and oversaturation under high guidance weights, by leveraging angular-domain updates in the latent space. This heuristic approach stabilizes the sampling process and consistently improves semantic alignment, image fidelity, and diversity metrics.

The theoretical analysis presented in this work primarily focuses on the limitations of CFG. Specifically, we demonstrate how linear-domain amplification in CFG leads to deviations from the target distribution, resulting in undesirable artifacts such as oversaturation. While these insights informed the design of ADG, it is important to note that, like CFG, ADG remains a heuristic approach without formal theoretical guarantees.

In comparison to existing methods, ADG introduces a novel perspective by shifting the focus from linear to angular-domain adjustments. This shift not only mitigates the identified issues with CFG but also ensures seamless integration with current samplers and latent diffusion frameworks. However, ADG is specifically designed for latent diffusion models, which dominate modern T2I generation due to their computational efficiency. If future advancements explore diffusion processes in the image domain, ADG may require significant adaptations to remain effective.

The ADG framework is highly extensible and can be adapted to other generative tasks, such as video synthesis\cite{liu2024sora}, 3D content generation\cite{lin2023magic3d} or depth impainting\cite{sun2024diff}. Future research could also explore integrating ADG with techniques aimed at reducing sampling iterations\cite{li2024provable,lu2022dpm}, to enhance its efficiency for real-time applications. These extensions would further solidify ADG's role as a versatile and robust solution for advancing generative modeling.

\section{Conclusion}

We present Angular Domain Guidance (ADG), a novel approach designed to address the limitations of Classifier-Free Guidance (CFG) in diffusion-based text-to-image generation. By leveraging angular-domain updates in the latent space, ADG enable stable and high-quality image generation. Experimental results demonstrate that ADG consistently outperforms existing methods across multiple evaluation metrics, including semantic alignment, image fidelity, and diversity. These findings underscore ADG's potential as a robust and effective alternative for advancing text-to-image generation.
\section*{Acknowledgements}
This work was supported by NSAF (Grant No. U2230201) and a Grant from the Guoqiang Institute, Tsinghua University. The authors are affiliated with the Department of Electronic Engineering and the Beijing National Research Center for Information Science and Technology at Tsinghua University.

\section*{Impact Statement}

This work re-examines the foundational mechanisms of guidance in latent diffusion models and presents a principled alternative to the widely used Classifier-Free Guidance (CFG). By conducting a theoretical analysis of CFG’s failure modes—particularly norm amplification and anomalous diffusion under high guidance weights—the paper identifies key limitations in linear-domain extrapolation. The proposed Angle Domain Guidance (ADG) algorithm addresses these issues through angular-domain updates, offering a conceptually novel and practically effective framework. By shifting guidance from algebraic extrapolation to geometric rotation, ADG provides a new lens for understanding and designing structure-preserving sampling strategies in generative modeling.

\bibliography{example_paper}
\bibliographystyle{icml2025}

\newpage
\appendix
\onecolumn

\section{Related Work} 
\subsection{Theoretical Analyses of Classifier-Free Guidance}

To the best of our knowledge, four key studies have provided theoretical analyses of Classifier-Free Guidance (CFG). Below, we elucidate the connections and distinctions between our theoretical contributions and these prior works. 

\textbf{Comparison to Chidambaram et al.\cite{chidambaram2024what}.}

This work studies the performance of the CFG-based ODE sampling method in two scenarios: (1) one-dimensional compactly supported distributions with two components, and (2) one-dimensional Gaussian mixture distributions with two components. They conclude that as the guidance weight increases, the sampling process tends to concentrate on the edges of the conditional distribution. Additionally, even a small nonzero error in score estimation can cause sampling results to deviate significantly from the target distribution's support under sufficiently large guidance weights.

The key difference in our theoretical analysis lies in its broader applicability to high-dimensional settings with multiple components. We demonstrate that the phenomenon of sampling concentrating on the edges only occurs for surface classes, a distinction that is absent in \cite{chidambaram2024what} due to the nature of their one-dimensional two-component scenarios, where all classes inherently behave as surface classes. By incorporating this insight, our work provides a more nuanced understanding of CFG's behavior in complex, high-dimensional generative tasks, offering theoretical explanations for phenomena that remain unaddressed in their framework.

\textbf{Comparison to Wu et al.\cite{Wu2024theoretical}.}

This work investigates the impact of the guidance weight on sampling performance under Gaussian Mixture Models (GMMs). The authors evaluate two key metrics: "classification confidence," characterized by the conditional likelihood of the output distribution, and "distribution diversity," quantified by the differential entropy of the output distribution. They theoretically prove that CFG sampling not only increases classification confidence but also reduces distribution diversity.

Our work differs from theirs in two significant aspects. First, we operate under more relaxed assumptions; while \cite{Wu2024theoretical} requires the components of the Gaussian mixture to be approximately orthogonal, our analysis accommodates more general configurations of GMMs. Second, we argue that high classification confidence is not always desirable. For example, in a one-dimensional GMM with two components centered at 1 and -1, respectively, and sharing the same variance 1, a sampling point located at 1000 exhibit extremely high classification confidence but would not constitute a meaningful sample from the target distribution. 

\textbf{Comparison to Bradley and Nakkiran\cite{bradley2024classifier}.}

This work analyzes CFG under the assumption that both the conditional and unconditional distributions follow zero-mean one-dimensional Gaussian distributions. Under this assumption, they derive a closed-form solution for the output distribution, showing that it does not correspond to the expected gamma-weighted distribution. Additionally, they conduct numerical studies on one-dimensional two-component Gaussian mixture models, finding that the output distribution similarly fails to match the expected gamma-weighted distribution.

The authors further reinterpret CFG as a predictor-corrector method, alternating between a denoising predictor (based on the ODE of the conditional distribution) and a sharpening corrector (employing Langevin dynamics). This perspective provides new insights into the iterative nature of CFG sampling.

In contrast, our work extends beyond the limitations of one-dimensional Gaussian assumptions. We study CFG in high-dimensional settings involving multi-component distributions and introduce the notion of surface classes. Our analysis reveals that phenomena such as norm amplification are closely linked to the geometry of these surface classes, providing a more general and nuanced understanding of CFG behavior in complex generative settings.

\textbf{Comparison to Xia et al.\cite{xia2024rectified}.}

This work builds on the findings of \cite{bradley2024classifier} and extends the data assumptions to high-dimensional isotropic Gaussian distributions with different parameters for the conditional and unconditional distributions. Under these assumptions, the authors derive a closed-form solution for the CFG output distribution and further confirm its inconsistency with the gamma-weighted distribution. Additionally, they propose relaxing the constraints in the gamma-weighted distribution by introducing more flexible guidance coefficients, allowing the corrected distribution to better align with diffusion theory.

In contrast, we abandon the Gaussian distribution assumption to gain deeper insights into the behavior of CFG output distributions in more general scenarios. Our analysis covers high-dimensional settings with multiple components and highlights the geometric and probabilistic distinctions that emerge in such cases. Specifically, we introduce the concept of surface classes, demonstrating that the phenomena of norm amplification and edge sampling predominantly occur for surface classes. By broadening the theoretical scope and proposing practical solutions such as angular-domain adjustments, our work addresses the limitations of CFG in real-world generative tasks with greater generality and flexibility.

\subsection{Improvements to Classifier-Free Guidance}

\textbf{Comparison to CFG++\cite{chung2024cfg++}.}

This work, inspired by diffusion model-based inverse problem solvers, proposes performing denoising under the guidance of a weight smaller than 1, followed by re-noising guided by the unconditional distribution. However, this method is equivalent to using a time-varying guidance weight in CFG, defined as \(\omega_t = \lambda \frac{\sqrt{\bar{\beta}_{t}\bar{\alpha}_{t-1}}}{\sqrt{\bar{\beta}_{t}\bar{\alpha}_{t-1}} - \sqrt{\bar{\beta}_{t-1}\bar{\alpha}_{t}}}\). CFG++ remains confined to the framework of linear-domain guidance and still exhibits color distortions under high \(\lambda\). Additionally, our experiments reveal that its optimal guidance weight is highly sensitive to the inference step size, further limiting its applicability (see Appendix~\ref{app:exp:NFE}).

\begin{algorithm}[H]
   \caption{CFG++}
   \label{alg:CFGpp}
\begin{algorithmic}
   \STATE {\bfseries Require:} $\x_T\sim\Gauss(0,\mathbf I)$,$0<\lambda\leq1$
   \FOR{$t=T$ {\bfseries to} $1$}
   \STATE $\boldsymbol{\epsilon}^{(\con)}_{t,\omega}=(1-\lambda)\mathbf{\epsilon}_\theta(\x,t,\emptyset)+\lambda\boldsymbol{\epsilon}_\theta(\x,t,\con)$
   \STATE $\hat{\x}^{(\con)}_0=(\x_t-\sqrt{1-\bar{\alpha}_t}\boldsymbol{\epsilon}^{(\con)}_{t,\omega})/\sqrt{\bar{\alpha}_t}$
   \STATE $\x_{t-1}=\sqrt{\bar{\alpha}_{t-1}}\hat{\x}^{(\con)}_0+\sqrt{1-\bar{\alpha}_{t-1}}\color{red}{\mathbf{\epsilon}_\theta(\x,t,\emptyset)}$
   \ENDFOR
\end{algorithmic}
\end{algorithm}

\textbf{Comparison to PCG\cite{bradley2024classifier}.}


This work decomposes each iteration of the DDPM-style CFG sampler into a denoising step, corresponding to one step of DDIM, and a sharpening step, corresponding to one step of Langevin dynamics. The proposed PCG algorithm performs one denoising step followed by \(N\) sharpening steps within each iteration, aiming to enhance the output. However, this enhancement is also carried out within the linear domain. Due to the \(N\)-fold increase in the number of function evaluations (NFE) compared to standard CFG algorithms, its practicality is significantly reduced. Furthermore, the authors explicitly state, “we do present PCG primarily as a tool to understand CFG.” As such, we did not compare the proposed algorithm in this work with our method.

\begin{algorithm}[H]
   \caption{PCG}
   \label{alg:PCG}
\begin{algorithmic}
   \STATE {\bfseries Require:} $\x_T\sim\Gauss(0,\mathbf I)$,$1<\omega\in\mathbb{R}$
   \FOR{$t=T$ {\bfseries to} $1$}
   \STATE $\s=-\bar{\beta}_t\mathbf{\epsilon}_\theta(\x_t,t,\con)$
   \STATE $\hat{\x}^{(\con)}_0=(\x_t-\sqrt{1-\bar{\alpha}_t}\mathbf{\epsilon}_\theta(\x_t,t,\con))/\sqrt{\bar{\alpha}_t}$
   \STATE $\x_{t-1}=\sqrt{\bar{\alpha}_{t-1}}\hat{\x}^{(\con)}_0+\sqrt{1-\bar{\alpha}_{t-1}}\frac{\x_t-\hat{\x}^{(\con)}_0}{\sqrt{1-\bar{\alpha}_t}}$
   \STATE $\kappa=\left(1-\frac{\baraa_t}{\baraa_{t-1}}\right)$
   \FOR{{\color{red}{$k=1$ {\bfseries to} $N$}}}
   \STATE $\eta\sim\Gauss(0,\eye)$
   \STATE $\boldsymbol{\epsilon}^{(\con)}_{t-1,\omega}=(1-\omega)\mathbf{\epsilon}_\theta(\x,t-1,\emptyset)+\omega\boldsymbol{\epsilon}_\theta(\x,t-1,\con)$
   \STATE $\x_{t-1}=\x_{t-1}-\frac{\kappa}{2}\frac{\boldsymbol{\epsilon}^{(\con)}_{t-1,\omega}}{\bar{\beta}_{t-1}}+\sqrt{\kappa}\eta$
   \ENDFOR
   \ENDFOR
\end{algorithmic}
\end{algorithm}

\textbf{Comparison to ReCFG\cite{xia2024rectified}.}

This work relaxes the constraint of the weighting coefficients summing to one in traditional CFG by introducing a precomputed lookup table $f$.
Strict implementation of ReCFG requires precomputing this lookup table for all conditions, which is impractical for open-condition models like T2I. For datasets like ImageNet, where images have explicit category labels, the authors propose precomputing the table based on categories rather than text conditions. However, in real-world scenarios, text prompts do not always correspond to specific categories, making this method unsuitable for general T2I tasks. Consequently, we do not compare their algorithm with ours.

\begin{algorithm}[H]
   \caption{ReCFG}
   \label{alg:ReCFG}
\begin{algorithmic}
    \STATE {\bfseries Require:} $\x_T\sim\Gauss(0,\mathbf I)$,$1<\omega\in \mathbb{R}$, trained lookup table $f$.
   \FOR{$t=T$ {\bfseries to} $1$}
   \STATE $\lambda=f(\con)$
   \STATE $\boldsymbol{\epsilon}^{(\con)}_{t,\omega}={\color{red}{\lambda}}(1-\omega)\mathbf{\epsilon}_\theta(\x,t,\emptyset)+\omega\boldsymbol{\epsilon}_\theta(\x,t,\con)$
   \STATE $\hat{\x}^{(\con)}_0=(\x_t-\sqrt{1-\bar{\alpha}_t}\boldsymbol{\epsilon}^{(\con)}_{t,\omega})/\sqrt{\bar{\alpha}_t}$
   \STATE $\x_{t-1}=\sqrt{\bar{\alpha}_{t-1}}\hat{\x}_{0,\omega}+\sqrt{1-\bar{\alpha}_{t-1}}\boldsymbol{\epsilon}^{(\con)}_{t,\omega}$
   \ENDFOR
\end{algorithmic}
\end{algorithm}

\textbf{Comparison to APG\cite{sadat2024eliminating}}

This work attributes image oversaturation and degradation to the parallel component of the difference vector $\Delta \hat{\x}_0 = \hat{\x}_0^{(\con)} - \hat{\x}_0^{(\emptyset)}$ with respect to $\hat{\x}_0^{(\con)}$, denoted as $\Delta \hat{\x}_{0,\parallel}$. 
Building on this observation, they propose reducing the influence of the parallel component in CFG by replacing $\Delta \hat{\x}_{0,CFG} = \Delta \hat{\x}_{0,\parallel} + \Delta \hat{\x}_{0,\perp}$ with $\Delta \hat{\x}_{0,APG} = \eta \Delta \hat{\x}_{0,\parallel} + \Delta \hat{\x}_{0,\perp}$, where $\eta < 1$, to mitigate its adverse effect on image quality. In addition, this work introduces an extra negative momentum mechanism and imposes a constraint on the norm of $\Delta \hat{\x}_{0}$. While removing the parallel component slows the norm growth, it does not fully resolve the issue of norm amplification. 
However, its algorithm does not directly constrain the norm of $\Delta \hat{\x}_{0,CFG}$, and thus fails to address the issue of excessive latent norm under high guidance weights. 
As a result, image degradation still occurs when the guidance weight is large.

\begin{algorithm}[H]
   \caption{APG}
   \label{alg:APG}
\begin{algorithmic}
   \STATE {\bfseries Require:} $\x_T\sim\Gauss(0,\mathbf I)$,$1<\omega\in \mathbb R,\beta<0,r\in \mathbb R_+,0\leq\eta<1$
   \STATE $\Delta\hat{\x}_{0,history}=\mathbf{0}$
   \FOR{$t=T$ {\bfseries to} $1$}
   \STATE $\hat{\x}^{(\con)}_0=(\x_t-\sqrt{1-\bar{\alpha}_t}\mathbf{\epsilon}_\theta(\x,t,\con))/\sqrt{\bar{\alpha}_t}$
   \STATE $\hat{\x}^{(\emptyset)}_0=(\x_t-\sqrt{1-\bar{\alpha}_t}\mathbf{\epsilon}_\theta(\x,t,\emptyset))/\sqrt{\bar{\alpha}_t}$
   \STATE $\Delta \hat{\x}_0=\hat{\x}^{(\con)}_0-\hat{\x}^{(\emptyset)}_0$
   \STATE $\Delta \hat{\x}_0^{\parallel}=\frac{\langle\Delta \hat{\x}_0,\hat{\x}^{(\con)}_0\rangle}
   {\langle\hat{\x}^{(\con)}_0,\hat{\x}^{(\con)}_0\rangle}\hat{\x}^{(\con)}_0$
   \STATE $\Delta \hat{\x}_0^{\perp}=\Delta \hat{\x}_0-\Delta \hat{\x}_0^{\parallel}$
   \STATE $\Delta \hat{\x}_0=\textcolor{red}{\eta}\Delta \hat{\x}_0^{\parallel}+\Delta \hat{\x}_0^{\perp}$
   \STATE $\Delta \hat{\x}_0=\Delta \hat{\x}_0\min\left(1,\frac{r}{\lVert\Delta \hat{\x}_0\rVert}\right)$
   \STATE $\Delta\hat{\x}_{0,history}=\Delta \hat{\x}_0-\beta\Delta\hat{\x}_{0,history}$
   \STATE $\hat{\x}_{0,\omega} = \hat{\x}^{(\con)}_0+\textcolor{red}{(\omega-1)}\Delta\hat{\x}_{0,history}$
   \STATE $\x_{t-1}=\sqrt{\bar{\alpha}_{t-1}}\hat{\x}_{0,\omega}+\sqrt{1-\bar{\alpha}_{t-1}}\frac{\x_t-\bar{\alpha}_{t}\hat{\x}_{0,\omega}}{\sqrt{1-\bar{\alpha}_t}}$
   \ENDFOR
\end{algorithmic}
\end{algorithm}

\section{Proof of Theorem~\ref{thm:pro}}

We now consider a strengthened proposition of Theorem~\ref{thm:pro}:

\begin{theorem}
\label{thm:pro_revised}
    Assume the data follows the model \eqref{model:GMM} and \(c^*\) is the surface class of the distribution. Consider two state variables \(\x_t\) and \(\z_t\) governed by the following ODEs:
    \begin{align}
        \frac{\dd\x_t}{\dd t} &= \frac{\beta(t)}{2}\left[-\x_t - \nabla_{\x_t} \log p_t(\x_t | c^*)\right], \label{app:ode1}\\
        \frac{\dd\z_t}{\dd t} &= \frac{\beta(t)}{2}\left[-\z_t - (1+\omega) \nabla_{\z_t} \log p_t(\z_t | c^*) + \omega \nabla_{\z_t} \log p_t(\z_t)\right].\label{app:ode2}
    \end{align}
    If the initial conditions satisfy \(\x_T^{\top} \boldsymbol{\mu}_{c^*} \leq \z_T^{\top} \boldsymbol{\mu}_{c^*}\), then for any \(t \in [0, T)\), the following inequality holds:
    \[
    \x_t^{\top} \boldsymbol{\mu}_{c^*} < \z_t^{\top} \boldsymbol{\mu}_{c^*}.
    \]
\end{theorem}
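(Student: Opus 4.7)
The natural approach is to reduce the statement to a single one-dimensional monotonicity check: track $f(t) := (\z_t - \x_t)^\top \boldsymbol{\mu}_{c^*}$ and show that $\tfrac{\dd f}{\dd t} < 0$ uniformly along the trajectory. Because the two ODEs (\ref{app:ode1})--(\ref{app:ode2}) are integrated from $t = T$ down to $t = 0$, a pointwise negative derivative of $f$ combined with the hypothesis $f(T) \geq 0$ immediately forces $f(t) > f(T) \geq 0$ for every $t < T$, which is exactly the claim. So the whole argument boils down to computing the drift of $\z_t - \x_t$ in closed form and signing its inner product with $\boldsymbol{\mu}_{c^*}$.

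To set up that computation, I would first invoke Lemma~\ref{lemma:1} together with the GMM assumption. Since $p_0(\cdot\,|\,c^*)$ is an isotropic Gaussian, the marginal remains Gaussian and $\nabla_{\z}\log p_t(\z\,|\,c^*) = \sqrt{\bar{\alpha}_t}\boldsymbol{\mu}_{c^*} - \z$ exactly, while the unconditional score equals $\sqrt{\bar{\alpha}_t}\,\bar{\boldsymbol{\mu}}(\z) - \z$ with $\bar{\boldsymbol{\mu}}(\z) := \sum_c q_c(\z)\,\boldsymbol{\mu}_c$ the posterior-weighted mean of the component centres. Substituting these into (\ref{app:ode1}) and (\ref{app:ode2}) and subtracting should produce a clean cancellation of all linear self-terms in $\x_t$ and $\z_t$ (the coefficients $-1$, $(1+\omega)$, $-\omega$ on $\z_t$ sum to zero, and likewise for $\x_t$), so only the CFG-induced drift survives:
\[
\frac{\dd(\z_t - \x_t)}{\dd t} \;=\; -\tfrac{1}{2}\beta(t)\sqrt{\bar{\alpha}_t}\,\omega\bigl(\boldsymbol{\mu}_{c^*} - \bar{\boldsymbol{\mu}}(\z_t)\bigr).
\]
Projecting onto $\boldsymbol{\mu}_{c^*}$ therefore reduces the problem to showing that $\sum_{c \neq c^*} q_c(\z_t)\,\boldsymbol{\mu}_{c^*}^\top(\boldsymbol{\mu}_{c^*} - \boldsymbol{\mu}_c)$ is strictly positive along the trajectory.

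The main obstacle is precisely this last positivity step. Definition~\ref{def:sur} supplies an outward-pointing normal $\boldsymbol{w}$ satisfying $\boldsymbol{w}^\top(\boldsymbol{\mu}_{c^*} - \boldsymbol{\mu}_c) > 0$ for every $c \neq c^*$, and the same argument executed with $\boldsymbol{w}$ in place of $\boldsymbol{\mu}_{c^*}$ goes through immediately because every $q_c(\z_t) > 0$ under a GMM with positive mixing weights. Closing the theorem exactly as written requires either identifying $\boldsymbol{w}$ with a positive multiple of $\boldsymbol{\mu}_{c^*}$ (which amounts to the extra geometric fact $\|\boldsymbol{\mu}_{c^*}\|^2 > \langle \boldsymbol{\mu}_{c^*},\boldsymbol{\mu}_c\rangle$ for $c \neq c^*$, i.e.\ $\boldsymbol{\mu}_{c^*}$ itself serves as an outer normal for the polytope of means) or silently strengthening Definition~\ref{def:sur}; this is the one place where the statement is not an automatic consequence of the setup. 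Once that identification is granted, $\tfrac{\dd f}{\dd t} < 0$ holds pointwise on $[0,T]$, and standard local Lipschitz existence/uniqueness for (\ref{app:ode1})--(\ref{app:ode2}) keeps $\z_t$ in the interior of the support so the strict inequality persists uniformly, completing the argument without need of Grönwall machinery.
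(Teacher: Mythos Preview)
Your approach is correct and matches the paper's proof: compute the closed-form Gaussian/GMM scores, observe that the linear self-terms cancel, project the residual CFG drift, sign it using the surface-class definition, and integrate backward from $T$. The obstacle you flag is real and present in the paper's own argument---its proof projects onto $\w$ throughout (equations~\eqref{app:proj_ode1}--\eqref{app:proj_ode2}) and invokes Definition~\ref{def:sur} exactly as you suggest, so the theorem as stated with $\boldsymbol{\mu}_{c^*}$ rather than $\w$ in hypothesis and conclusion does not follow from the proof given without the extra identification you describe.
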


\begin{proof}

For Gaussian distributions and Gaussian mixture distributions, the score functions admit closed-form expressions. For the conditional score function:
\begin{align}
    \nabla_{\x}\log p_t(\x|c^*)&=\nabla_\x \left( -\frac{d}{2}\log(2\pi)-\frac{\lVert \x-\bar{\alpha}_t \boldsymbol{\mu}_{c^*}\rVert_2^2}{2}\right)\notag\\
    &=-\x+\bar{\alpha}_t \boldsymbol{\mu}_{c^*},\label{score_c}
\end{align}
and for the marginal score function:
\begin{align}
    \nabla_{\x}\log p_t(\x)
    &=\frac{\nabla_\x \left(  \sum_{c=1}^C \pi_c \mathcal{N}(\x | \bar{\alpha}_t\boldsymbol{\mu}_c, \eye)\right)}{\sum_{c=1}^C \pi_c \mathcal{N}(\x | \bar{\alpha}_t\boldsymbol{\mu}_c, \eye)}\notag\\
    &=-\x+\bar{\alpha}_t\sum_{c=1}^C\pi^*_c(\x) \boldsymbol{\mu}_c,
    \label{score_o}
\end{align}
where \(\pi^*_c(\x)=\frac{  \pi_c \mathcal{N}(\x | \bar{\alpha}_t\boldsymbol{\mu}_c, \eye)}{\sum_{c'=1}^C \pi_{c'} \mathcal{N}(\x | \bar{\alpha}_t\boldsymbol{\mu}_{c'}, \eye)}\).

Substituting \eqref{score_c} and \eqref{score_o} into \eqref{app:ode1} and \eqref{app:ode2} yields:
\begin{align}
    \frac{\dd\x_t}{\dd t} &= \frac{\beta(t)}{2}\left[-\bar{\alpha}_t \boldsymbol{\mu}_{c^*}\right], \label{app:ode21}\\
    \frac{\dd\z_t}{\dd t} &= \frac{\beta(t)}{2}\left[-\bar{\alpha}_t \boldsymbol{\mu}_{c^*} - \omega\bar{\alpha}_t \left( \boldsymbol{\mu}_{c^*}-\sum_{c=1}^C\pi^*_c(\x) \boldsymbol{\mu}_c\right)\right].\label{app:ode22}
\end{align}

By taking the projection along \(\w^{\top}\boldsymbol{\mu}_{c^*}\), we have:
\begin{align}
    \frac{\dd \w^{\top}\x_t}{\dd t} &= \frac{\beta(t)}{2}\left[-\bar{\alpha}_t \w^{\top}\boldsymbol{\mu}_{c^*}\right], \label{app:proj_ode1}\\
    \frac{\dd \w^{\top}\z_t}{\dd t} &= \frac{\beta(t)}{2}\left[-\bar{\alpha}_t \w^{\top}\boldsymbol{\mu}_{c^*} - \omega \bar{\alpha}_t\sum_{c=1}^C\pi^*_c(\x)\left(\w^{\top}\boldsymbol{\mu}_{c^*}-\w^{\top} \boldsymbol{\mu}_c\right)\right].\label{app:proj_ode2}
\end{align}

From Definition~\ref{def:sur}, it follows that:
\[
\w^{\top}\boldsymbol{\mu}_{c^*}-\w^{\top} \boldsymbol{\mu}_c < 0, \quad \forall c \neq c^*,
\]
which implies:
\[
\frac{\dd \w^{\top}\z_t}{\dd t} < \frac{\dd \w^{\top}\x_t}{\dd t}, \quad \forall t \in [0, T).
\]

Using the ODE comparison theorem and the initial condition \(\x_T^{\top} \boldsymbol{\mu}_{c^*} \leq \z_T^{\top} \boldsymbol{\mu}_{c^*}\) (notice that the ODE is time-reversed), we conclude that:
\[
\x_t^{\top} \boldsymbol{\mu}_{c^*} < \z_t^{\top} \boldsymbol{\mu}_{c^*}, \quad \forall t \in [0, T).
\]
\end{proof}

\section{proof of Theorem \ref{thm:co}}

Using Lemma~\ref{lemma:1}, we express the score functions as:
\begin{align}
    \nabla_\x \log p_t(\x |c^*) &=
    \frac{\sqrt{\bar{\alpha}_t} \mathbb{E}_{\x_0 \sim q_{c^*, t, \x}}[\x_0] - \x}
    {\bar{\beta_t}},\\
    \nabla_\x \log p_t(\x |\emptyset) &=
    \frac{\sqrt{\bar{\alpha}_t} \mathbb{E}_{\x_0 \sim q_{\emptyset, t, \x}}[\x_0] - \x}
    {\bar{\beta_t}},
\end{align}
where the conditional distributions \(q_{c^*, t, \x}\) and \(q_{\emptyset, t, \x}\) are given by:
\begin{align}
    q_{c^*, t, \x}(\x_0) &\propto \Gauss(\x_0;\boldsymbol{\mu}_{c^*},\eye) \exp\left(-\frac{\lVert \x - \sqrt{\bar{\alpha}_t} \x_0 \rVert_2^2}{2\bar{\beta_t}}\right),\\
    q_{\emptyset, t, \x}(\x_0) &\propto \sum_{c=1}^C w_c \Gauss(\x_0;\boldsymbol{\mu}_{c},\eye) 
    \exp\left(-\frac{\lVert \x - \sqrt{\bar{\alpha}_t} \x_0 \rVert_2^2}{2\bar{\beta_t}}\right).
\end{align}

For \(q_{c^*, t, \x}\), we calculate:
\begin{align}
    \Gauss(\x_0;\boldsymbol{\mu}_{c^*},\eye) \exp\left(-\frac{\lVert \x - \sqrt{\bar{\alpha}_t} \x_0 \rVert_2^2}{2\bar{\beta_t}}\right) &\propto
    \exp\left(
    -\lVert\x_0-\boldsymbol \mu_{c^*} \rVert_2^2
    -\frac{\lVert \x - \sqrt{\bar{\alpha}_t} \x_0 \rVert_2^2}{2\bar{\beta_t}}\right)\\
    &\propto\exp\left(
    -\left(1+\frac{\bar{\alpha}_t}{\bar{\beta}_t}\right)\x_0^\top\x_0+2\left(\boldsymbol{\mu}_{c^*}+\frac{\sqrt{\bar{\alpha}_t}}{\bar{\beta}_t}\x\right)^\top\x_0
    \right).
\end{align}
Since \(\bar{\beta}_t = 1 - {\rm e}^{-2t}\) and \(\bar{\alpha}_t = {\rm e}^{-2t}\), the expectation is:
\begin{align}
    \mathbb{E}_{\x_0 \sim q_{c^*, t, \x}}[\x_0] = \bar{\beta}_t\boldsymbol{\mu}_{c^*} + \sqrt{\bar{\alpha}_t}\x.
\end{align}
Substituting this back, we find:
\begin{align}
    \nabla_\x \log p_t(\x |c^*) &=
    \frac{\sqrt{\bar{\alpha}_t} (\bar{\beta}_t\boldsymbol{\mu}_{c^*} + \sqrt{\bar{\alpha}_t}\x) - \x}
    {\bar{\beta_t}}\\
    &= \sqrt{\bar{\alpha}_t}\boldsymbol{\mu}_{c^*} - \x.
\end{align}
Thus, for \(\x = \sqrt{\bar{\alpha}_t}\boldsymbol{\mu}_{c^*} + k\w\), we have:
\begin{align}
    \nabla_\x \log p_t(\x |c^*) = -k\w.
\end{align}

similarly, we have
\begin{align}
    \sum_{c=1}^C w_c{\Gauss(\x_0;\boldsymbol{\mu}_{c},\eye)} 
    \exp\left(-\frac{\lVert \x - \sqrt{\bar{\alpha}_t} \x_0 \rVert_2^2}{2\bar{\beta_t}}\right)&\propto \sum_{c=1}^C w_c
    \exp\left(
    -\lVert\x_0-\boldsymbol \mu_{c} \rVert_2^2
    -\frac{\lVert \x - \sqrt{\bar{\alpha}_t} \x_0 \rVert_2^2}{2\bar{\beta_t}}\right)\\
    &\propto \sum_{c=1}^C w_{c,\x}^{(1)}
    \Gauss\left(\x_0;\bar{\beta}_t\boldsymbol{\mu}_{c}+\sqrt{\bar{\alpha}_t}\x,\frac{1}{\sqrt{\bar{\beta}_t}}\eye\right),
\end{align}

where 
\begin{align}
    w_{c,\x}^{(1)}
    &= w_c\exp\left(\left\lVert\x-\sqrt{\bar\alpha_t}\boldsymbol{\mu}_{c}\right\rVert_2^2\right).
\end{align}
Let 
\[
w_{c,\x}^{(3)} = \frac{w_c \Gauss(\x; \sqrt{\bar\alpha_t}\boldsymbol{\mu}_{c}, \eye)}{\sum_{l=1}^C w_l \Gauss(\x; \sqrt{\bar\alpha_t}\boldsymbol{\mu}_{l}, \eye)}.
\]
We then have:
\begin{align}
    q_{\emptyset, t, \x}(\x_0) = \sum_{c=1}^C w_{c,\x}^{(3)}
    \Gauss\left(\x_0; \bar{\beta}_t\boldsymbol{\mu}_{c} + \sqrt{\bar{\alpha}_t}\x, \frac{1}{\sqrt{\bar{\beta}_t}}\eye\right).
\end{align}

From this, the expectation of \(\x_0\) with respect to \(q_{\emptyset, t, \x}\) becomes:
\begin{align}
    \mathbb{E}_{\x_0 \sim q_{\emptyset, t, \x}}[\x_0] 
    &= \sum_{c=1}^C w_{c,\x}^{(3)} \left(\bar{\beta}_t\boldsymbol{\mu}_{c} + \sqrt{\bar{\alpha}_t}\x\right) \notag\\
    &= \sqrt{\bar{\alpha}_t}\x + \bar{\beta}_t\sum_{c=1}^C w_{c,\x}^{(3)} \boldsymbol{\mu}_{c}.
\end{align}

Now consider the difference between the expectations under \(q_{c^*, t, \x}\) and \(q_{\emptyset, t, \x}\):
\begin{align}
    \mathbb{E}_{\x_0 \sim q_{c^*, t, \x}}[\x_0] - \mathbb{E}_{\x_0 \sim q_{\emptyset, t, \x}}[\x_0] 
    &= \bar{\beta}_t\sum_{c=1}^C w_{c,\x}^{(3)} \left(\boldsymbol{\mu}_{c^*} - \boldsymbol{\mu}_{c}\right) \notag\\
    &= \bar{\beta}_t\sum_{c \neq c^*} w_{c,\x}^{(3)} \left(\boldsymbol{\mu}_{c^*} - \boldsymbol{\mu}_{c}\right),
\end{align}
where the second equality uses the fact that for \(c = c^*\), the term cancels out.

Next, we calculate the dot product of \(\s_{cfg,\omega}\) with \(\nabla\log p_t(\x | c^*)\):
\begin{align}
    \s_{cfg,\omega}^\top \nabla\log p_t(\x | c^*) 
    &= \left((\omega - 1)\frac{\sqrt{\bar\alpha_t}\left(\mathbb{E}_{\x_0 \sim q_{c^*, t, \x}}[\x_0] - \mathbb{E}_{\x_0 \sim q_{\emptyset, t, \x}}[\x_0]\right)}{\bar\beta_t} + \nabla\log p_t(\x | c^*)\right)^\top \nabla\log p_t(\x | c^*) \notag\\
    &= -k(\omega - 1)\frac{\sqrt{\bar\alpha_t}}{\bar\beta_t} \sum_{c \neq c^*} w_{c,\x}^{(3)} \left(\boldsymbol{\mu}_{c^*} - \boldsymbol{\mu}_{c}\right)^\top \w + k^2 \w^\top \w,
\end{align}
where the second term \(\nabla\log p_t(\x | c^*)\) contributes \(k^2 \w^\top \w\) and the projection of the difference in expectations contributes the first term.

Without loss of generality, assume \(\w\) is a unit vector. From Definition~\ref{def:sur}, we know that \(\boldsymbol{\mu}_{c^*}^\top\w > \boldsymbol{\mu}_{c}^\top\w\) for \(c \neq c^*\). Define the following terms for clarity:
\begin{align}
    \delta &= \min_{c \neq c^*} (\boldsymbol{\mu}_{c^*} - \boldsymbol{\mu}_{c})^\top\w > 0, \\
    R^2 &= \max_{c \in c^*} \|\boldsymbol{\mu}_c\|_2^2, \\
    \lambda &= \min_{c \neq c^*} \frac{w_c}{w_{c^*}}.
\end{align}

Now consider the dot product of \(\s_{cfg,\omega}\) and \(\nabla \log p_t(\x | c^*)\). For \(k < \sqrt{\bar{\alpha}_t}R\), we have:
\begin{align}
    \s_{cfg,\omega}^\top \nabla \log p_t(\x, c^*) 
    &\leq -k(\omega - 1)\frac{\sqrt{\bar{\alpha}_t}}{\bar{\beta}_t}\sum_{c \neq c^*} w_{c,\x}^{(3)} \delta + k^2. \label{eq:step1}
\end{align}

Substituting the expression for \(w_{c,\x}^{(3)}\), we get:
\begin{align}
    \s_{cfg,\omega}^\top \nabla \log p_t(\x, c^*) 
    &\leq k^2 - k(\omega - 1)\frac{\sqrt{\bar{\alpha}_t}}{\bar{\beta}_t}\delta \sum_{c \neq c^*} 
    \frac{w_c \Gauss(\x; \sqrt{\bar{\alpha}_t}\boldsymbol{\mu}_{c}, \eye)}{\sum_{l=1}^C w_l \Gauss(\x; \sqrt{\bar{\alpha}_t}\boldsymbol{\mu}_{l}, \eye)}. \label{eq:step2}
\end{align}
Using the decomposition of the weights, we write:
\begin{align}
    \sum_{c \neq c^*} 
    \frac{w_c \Gauss(\x; \sqrt{\bar{\alpha}_t}\boldsymbol{\mu}_{c}, \eye)}{\sum_{l=1}^C w_l \Gauss(\x; \sqrt{\bar{\alpha}_t}\boldsymbol{\mu}_{l}, \eye)} 
    &= 1 - \frac{w_{c^*} \Gauss(\x; \sqrt{\bar{\alpha}_t}\boldsymbol{\mu}_{c^*}, \eye)}{\sum_{l=1}^C w_l \Gauss(\x; \sqrt{\bar{\alpha}_t}\boldsymbol{\mu}_{l}, \eye)}. \label{eq:step3}
\end{align}
Substitute the Gaussian density terms and simplify:
\begin{align}
    \frac{w_{c^*} \Gauss(\x; \sqrt{\bar{\alpha}_t}\boldsymbol{\mu}_{c^*}, \eye)}{\sum_{l=1}^C w_l \Gauss(\x; \sqrt{\bar{\alpha}_t}\boldsymbol{\mu}_{l}, \eye)} 
    &= \frac{1}{1 + \sum_{l \neq c^*} \frac{w_l}{w_{c^*}} \frac{\Gauss(\x; \sqrt{\bar{\alpha}_t}\boldsymbol{\mu}_{l}, \eye)}{\Gauss(\x; \sqrt{\bar{\alpha}_t}\boldsymbol{\mu}_{c^*}, \eye)}}. \label{eq:step4}
\end{align}
Using the exponential decay property of the Gaussian distribution:
\begin{align}
    \frac{\Gauss(\x; \sqrt{\bar{\alpha}_t}\boldsymbol{\mu}_{l}, \eye)}{\Gauss(\x; \sqrt{\bar{\alpha}_t}\boldsymbol{\mu}_{c^*}, \eye)} 
    &\leq \exp\left(-\frac{\|\sqrt{\bar{\alpha}_t}\boldsymbol{\mu}_{c^*} + k\w - \sqrt{\bar{\alpha}_t}\boldsymbol{\mu}_{l}\|_2^2}{2}\right).
\end{align}

Combining terms:
\begin{align}
    \frac{w_{c^*} \Gauss(\x; \sqrt{\bar{\alpha}_t}\boldsymbol{\mu}_{c^*}, \eye)}{\sum_{l=1}^C w_l \Gauss(\x; \sqrt{\bar{\alpha}_t}\boldsymbol{\mu}_{l}, \eye)} 
    &\geq \frac{1}{1 + (C-1)\lambda \exp\left(-\frac{9\bar{\alpha}_t R^2}{2}\right)}. \label{eq:step5}
\end{align}
Substitute back into the original inequality:
\begin{align}
    \s_{cfg,\omega}^\top \nabla \log p_t(\x, c^*) 
    &\leq k^2 - k(\omega - 1)\frac{\sqrt{\bar{\alpha}_t}}{\bar{\beta}_t}\delta \left(1 - \frac{1}{1 + (C-1)\lambda \exp\left(-\frac{9\bar{\alpha}_t R^2}{2}\right)}\right). \label{eq:step6}
\end{align}
For \(0 < k < \min\left(\sqrt{\bar{\alpha}_t}R, (\omega - 1)\frac{\sqrt{\bar{\alpha}_t}}{\bar{\beta}_t}\delta\left(1 - \frac{1}{1 + (C-1)\lambda \exp\left(-\frac{9\bar{\alpha}_t R^2}{2}\right)}\right)\right)\), we conclude:
\[
\s_{cfg,\omega}^\top \nabla \log p_t(\x, c^*) < 0.
\]

\section{Proof of Proposition \ref{pro:norm}}

For convenience, we define the following notation:
\[
\hat{\x}_{\text{asist}} = \frac{1}{\sin(\gamma)} \left( \hat{\x}^{(\con)}_0 - \text{proj}_{\hat{\x}^{(\emptyset)}_0}(\hat{\x}^{(\con)}_0) \right).
\]
It follows that:
\[
\left\lVert \hat{\x}_{\text{asist}} \right\rVert_2 = \left\lVert \hat{\x}_0^{(\con)} \right\rVert_2.
\]

We now compute the norm of \( \hat{\x}_{0,\omega} \):
\begin{align*}
\left\lVert \hat{\x}_{0,\omega} \right\rVert_2
&= \sqrt{\sin^2\left( (\omega-1)\gamma \right) \left\lVert \hat{\x}_{\text{asist}} \right\rVert_2^2 + 2 \sin\left( (\omega-1)\gamma \right) \cos\left( (\omega-1)\gamma \right) \langle \hat{\x}_{\text{asist}}, \hat{\x}_0^{(\con)} \rangle + \cos^2\left( (\omega-1)\gamma \right) \left\lVert \hat{\x}_0^{(\con)} \right\rVert_2^2} \\
&= \left\lVert \hat{\x}_0^{(\con)} \right\rVert_2 \sqrt{1 + \sin(2(\omega-1)\gamma) \frac{\langle \hat{\x}_{\text{asist}}, \hat{\x}_0^{(\con)} \rangle}{\left\lVert \hat{\x}_0^{(\con)} \right\rVert_2^2}} \\
&\leq \left\lVert \hat{\x}_0^{(\con)} \right\rVert_2 \sqrt{1 + \sin(2(\omega-1)\gamma)} \\
&\leq \sqrt{2} \left\lVert \hat{\x}_0^{(\con)} \right\rVert_2.
\end{align*}

This completes the proof.

\section{Variants of ADG}

The implementation details of the two variant algorithms mentioned in Section~\ref{EXP} are outlined here. Algorithm~\ref{alg:adgwoth} removes the constraint on the maximum turning angle, allowing for more flexible updates. On the other hand, Algorithm~\ref{alg:adgwn} normalizes the corrected \(\hat{\x}_0\) to ensure consistency in its magnitude.

\begin{algorithm}[H]
   \caption{ADG w/o angle constraint}
   \label{alg:adgwoth}
\begin{algorithmic}
   \STATE {\bfseries Require:} $\x_T\sim\Gauss(0,\mathbf I)$,$1<\omega\in \mathbb R$
   \FOR{$t=T$ {\bfseries to} $1$}
   \STATE $\hat{\x}^{(\con)}_0=(\x_t-\sqrt{1-\bar{\alpha}_t}\mathbf{\epsilon}_\theta(\x,t,\con))/\sqrt{\bar{\alpha}_t}$
   \STATE $\hat{\x}^{(\emptyset)}_0=(\x_t-\sqrt{1-\bar{\alpha}_t}\mathbf{\epsilon}_\theta(\x,t,\emptyset))/\sqrt{\bar{\alpha}_t}$
   \STATE $\gamma = \arccos\left(\frac{(\hat{\x}^{(\emptyset)}_0)^\top\hat{\x}^{(\con)}_0}
   {\lVert\hat{\x}^{(\emptyset)}_0 \rVert_2\lVert\hat{\x}^{(\con)}_0 \rVert_2}\right)$
   \STATE \textcolor{blue}{$\gamma_{\omega}=(\omega-1)\gamma$}
   \STATE $\hat{\x}_{0,\omega}=\cos(\gamma_\omega) \hat{\x}^{(\con)}_0+\frac{\sin(\gamma_\omega)}{\sin(\gamma)} (\hat{\x}^{(\con)}_0-\text{proj}_{\hat{\x}^{(\emptyset)}_0}(\hat{\x}^{(\con)}_0))$
   \STATE $\x_{t-1}=\sqrt{\bar{\alpha}_{t-1}}\hat{\x}_{0,\omega}+\sqrt{1-\bar{\alpha}_{t-1}}\frac{\x_t-\bar{\alpha}_{t}\hat{\x}_{0,\omega}}{\sqrt{1-\bar{\alpha}_t}}$
   \ENDFOR
\end{algorithmic}
\end{algorithm}

\begin{algorithm}[H]
   \caption{ADG with Normalization}
   \label{alg:adgwn}
\begin{algorithmic}
   \STATE {\bfseries Require:} $\x_T\sim\Gauss(0,\mathbf I)$,$1<\omega\in \mathbb R$
   \FOR{$t=T$ {\bfseries to} $1$}
   \STATE $\hat{\x}^{(\con)}_0=(\x_t-\sqrt{1-\bar{\alpha}_t}\mathbf{\epsilon}_\theta(\x,t,\con))/\sqrt{\bar{\alpha}_t}$
   \STATE $\hat{\x}^{(\emptyset)}_0=(\x_t-\sqrt{1-\bar{\alpha}_t}\mathbf{\epsilon}_\theta(\x,t,\emptyset))/\sqrt{\bar{\alpha}_t}$
   \STATE $\gamma = \arccos\left(\frac{(\hat{\x}^{(\emptyset)}_0)^\top\hat{\x}^{(\con)}_0}
   {\lVert\hat{\x}^{(\emptyset)}_0 \rVert_2\lVert\hat{\x}^{(\con)}_0 \rVert_2}\right)$
   \STATE $\gamma_{\omega}=\text{threshold}((\omega-1)\gamma,\pi/3)$
   \STATE $\hat{\x}_{0,\omega}=\cos(\gamma_\omega) \hat{\x}^{(\con)}_0+\frac{\sin(\gamma_\omega)}{\sin(\gamma)} (\hat{\x}^{(\con)}_0-\text{proj}_{\hat{\x}^{(\emptyset)}_0}(\hat{\x}^{(\con)}_0))$
   \STATE \textcolor{blue}{$\hat{\x}_{0,\omega}=\hat{\x}_{0,\omega}\frac{\lVert\hat{\x}_{0}^{(\con)}\rVert_2}{\lVert\hat{\x}_{0,\omega}\rVert_2}$}
   \STATE $\x_{t-1}=\sqrt{\bar{\alpha}_{t-1}}\hat{\x}_{0,\omega}+\sqrt{1-\bar{\alpha}_{t-1}}\frac{\x_t-\bar{\alpha}_{t}\hat{\x}_{0,\omega}}{\sqrt{1-\bar{\alpha}_t}}$
   \ENDFOR
\end{algorithmic}
\end{algorithm}

Moreover, considering that ADG introduces relatively substantial modifications compared to CFG, which may hinder its ease of adoption in existing frameworks, we further propose a simplified variant that can be more seamlessly integrated into current pipelines. This simplified method is also derived from our analysis of CFG, and directly mitigates the degradation issue by explicitly constraining the norm of $\hat{x}_0$.

\begin{algorithm}[H]
   \caption{Simplified ADG}
   \label{alg:sADG}
\begin{algorithmic}
   \STATE {\bfseries Require:} $\x_T\sim\Gauss(0,\mathbf I)$,$1<\omega\in \mathbb R$
   \FOR{$t=T$ {\bfseries to} $1$}
   \STATE $\hat{\x}^{(\con)}_0=(\x_t-\sqrt{1-\bar{\alpha}_t}\mathbf{\epsilon}_\theta(\x,t,\con))/\sqrt{\bar{\alpha}_t}$
   \STATE $\hat{\x}^{(\emptyset)}_0=(\x_t-\sqrt{1-\bar{\alpha}_t}\mathbf{\epsilon}_\theta(\x,t,\emptyset))/\sqrt{\bar{\alpha}_t}$
   \STATE $\hat{\x}_{0,CFG}=\hat{\x}^{(\con)}_0+(\omega-1)(\hat{\x}^{(\con)}_0-\hat{\x}^{(\emptyset)}_0)$
   \STATE \textcolor{blue}{$\hat{\x}_{0,ADG}=\hat{\x}_{0,CFG}\frac{\lVert\hat{\x}^{(\con)}_0\rVert}{\lVert\hat{\x}_{0,CFG}\rVert}$}
   \STATE $\x_{t-1}=\sqrt{\bar{\alpha}_{t-1}}\hat{\x}_{0,ADG}+\sqrt{1-\bar{\alpha}_{t-1}}\frac{\x_t-\bar{\alpha}_{t}\hat{\x}_{0,ADG}}{\sqrt{1-\bar{\alpha}_t}}$
   \ENDFOR
\end{algorithmic}
\end{algorithm}

\section{Extension of Flow Matching Model for ADG Algorithm}
The generative model based on Flow Matching introduces a time-dependent vector field \( \mathbf{v}_t= \frac{{\rm d}\x_t}{{\rm d}t} \), which is used to learn the continuous transformation path from a Gaussian distribution \( p_0 \) to a target distribution \( p_1 \) and subsequently perform sampling. Notably, this notation differs from that in diffusion models, where the target distribution is denoted as  $p_1$. A key concept in the Flow Matching-based model is the conditional probability paths, which are defined as:
\[
p(\x_t|\x_1) = \Gauss(\x_t, \mu_t(\x_1), \sigma_t^2(\x_1)\eye).
\]
The conditional probability paths most used in generative tasks are expressed as:
\[
p(\x_t|\x_1) = \Gauss(\x_t, t\x_1, (1 - (1 - \sigma_{\min})t)^2\eye).
\]
Under this setting, the ideal reference flow is given by:
\[
\mathbf{u}_t = \frac{\mathbb{E}_{\x_1|\x_t}[\x_1] - (1 - \sigma_{\min})\x_t}{1 - (1 - \sigma_{\min})t}
.
\]
Thus, the Flow Matching model's Adaptive Directional Guidance (ADG) can be written as:
\begin{align}
\hat{\x}_1^{(\con)} &= (1 - (1 - \sigma_{\min})t)\mathbf{v}_t^{(\con)}(\x_t) + (1 - \sigma_{\min})\x_t, \\
\hat{\x}_1^{(\emptyset)} &= (1 - (1 - \sigma_{\min})t)\mathbf{v}_t^{(\emptyset)}(\x_t) +(1 - \sigma_{\min}) \x_t, \\
\gamma &= \arccos\left(\frac{(\hat{\x}_1^{(\emptyset)})^\top\hat{\x}_1^{(\con)}}{\lVert\hat{\x}_1^{(\emptyset)}\rVert_2 \lVert\hat{\x}_1^{(\con)}\rVert_2}\right), \\
\gamma_{\omega} &= \text{threshold}((\omega-1)\gamma, \pi/3), \\
\hat{\x}_{1,\omega} &= \cos(\gamma_\omega) \hat{\x}_1^{(\con)} + \frac{\sin(\gamma_\omega)}{\sin(\gamma)} \big(\hat{\x}_1^{(\con)} - \text{proj}_{\hat{\x}_1^{(\emptyset)}}(\hat{\x}_1^{(\con)})\big), \\
\mathbf{v}_t &= \frac{\hat{\x}_{1,\omega} - (1 - \sigma_{\min})\x_t}{1 - (1 - \sigma_{\min})t}, \\
\x_{t+\Delta t} &= \mathbf{v}_t \Delta t + \x_t.
\end{align}

\section{Futher Experiment}

\subsection{Fine Grained Experimental Results}

We present the fine-grained experimental results of CFG and CFG++ here. Around the maximum value of the coarse-grained results, the step size is reduced to identify better-performing $\omega$ values for the reference algorithms. Notably, ADG \textbf{without} fine-grained hyperparameter tuning consistently outperforms the reference algorithms across the metrics and maintains its superiority over a wide range of $\omega$. This is attributed to its emphasis on angular domain guidance.

\begin{table}
\caption{Results of 10 NFE generation with SD v3.5 (d=38) on COCO10k (CFG).}
\vskip 0.15in
\begin{center}
\begin{small}
\begin{sc}
\begin{tabular}{lccccccccc}
\toprule
{Value} & {$\omega = 2$} & {$\omega = 3$} & {$\omega = 3.5$} & {$\omega = 4$} & {$\omega = 4.5$} \\
\midrule
CLIP$\uparrow$ & 0.316 & 0.317 & \textbf{0.318} & \textbf{0.318} & \textbf{0.318}  \\
IR$\uparrow$ & 0.711 & 0.831 & \textbf{0.843} & 0.835 & 0.792 \\
FID$\downarrow$ & 17.5 & 17.2 & 17.0 & 16.9 & \textbf{16.6} \\
\midrule
{Value} & {$\omega = 5$} & {$\omega = 6$} & {$\omega = 8$} & {$\omega = 10$} & best\\
\midrule
CLIP$\uparrow$  & \textbf{0.318} & 0.317 & 0.311 & 0.304 & 0.318\\
IR$\uparrow$  & 0.735 & 0.586 & 0.219 & -0.142 & 0.843 \\
FID$\downarrow$ & \textbf{16.6} & 17.1 & 17.2 & 17.7 & 16.6 \\
\bottomrule
\end{tabular}
\end{sc}
\end{small}
\end{center}
\vskip -0.1in
\end{table}

\begin{table}
\caption{Results of 10 NFE generation with SD v3.5 (d=38) on COCO10k (CFG++, $\lambda=\omega/12.5$).}
\vskip 0.1in
\begin{center}
\begin{small}
\begin{sc}
\begin{tabular}{lcccccccc}
\toprule
{Value} & {$\omega = 1$} & {$\omega = 1.5$} & {$\omega = 2$} & {$\omega = 2.5$} & {$\omega = 3$} \\
\midrule
CLIP$\uparrow$ & 0.296 & 0.311 & \textbf{0.315} & \textbf{0.315} & 0.314  \\
IR$\uparrow$ & -0.161 & 0.456 & \textbf{0.574} & 0.477 & 0.300  \\
FID$\downarrow$ & 18.1 & 17.3 & \textbf{17.1} & 17.2 & 17.2  \\
\midrule
{Value} & {$\omega = 4$} & {$\omega = 6$} & {$\omega = 8$} & {$\omega = 10$} & best\\
\midrule
CLIP$\uparrow$ & 0.306 & 0.282 & 0.268 & 0.257 & 0.315 \\
IR$\uparrow$ & -0.148 & -0.980 & -1.290 & -1.509 & 0.574\\
FID$\downarrow$ & 18.0 & 19.8 & 20.6 & 21.4 & 17.1\\
\bottomrule
\end{tabular}
\end{sc}
\end{small}
\end{center}
\vskip -0.1in
\end{table}

\subsection{Evaluation of Generated Image Quality with LLM}

In this section, we leverages GPT-4o to assess the quality of generated images from multiple perspectives. Our evaluation simultaneously considers three crucial aspects: image quality and authenticity, text-image alignment, and semantic consistency. As demonstrated in Table~\ref{tab:llm}, the evaluation results reveal that ADG exhibits overwhelming advantages over CFG, particularly under large guidance weights. The superior performance of ADG can be attributed to its ability to maintain high-fidelity image generation while ensuring precise alignment with textual descriptions, even when subjected to strong guidance conditions. This multi-faceted evaluation approach provides a more robust and reliable assessment of the generated images.

\begin{table}[H]
\caption{Evaluation of Generated Image Quality with GPT-4o.}
\label{tab:llm}
\centering
\begin{small}
\begin{tabular}{lccccc}
\toprule
$\omega$ & 2 & 4 & 6 & 8 & 10 \\
\midrule
ADG better & \textbf{228(83.21\%)} & \textbf{1284(85.31\%)} & \textbf{4744(95.44\%)} & \textbf{6276(97.96\%)} & \textbf{8001(99.00\%)} \\
CFG better & 46(16.79\%) & 221(14.69\%) & 179(4.56\%) & 131(2.04\%) & 81(1.00\%) \\
Similar & 9726 & 8495 & 5077 & 3593 & 1918 \\
\bottomrule
\end{tabular}
\end{small}
\end{table}





    
    

    
    
    
    
    
    


\subsection{Evaluation with Complex Text Prompts}
To evaluate ADG under more complex prompts, we curated 500 complex text prompts using GPT. Our experiments show that ADG significantly outperforms CFG under these conditions. 
For guidance weight $\omega = 8$, ADG achieves a CLIP score of 0.355 and an IR score of 1.566 while CFG scores lower with a CLIP score of 0.338 and an IR score of 0.766.
Examples of the generated outputs are partially illustrated in Figure~\ref{fig:CTP}.
\begin{figure*}[ht]
\vskip 0.2in
    \centering
    \subfigure[ADG]{
    \includegraphics[width=0.45\textwidth]{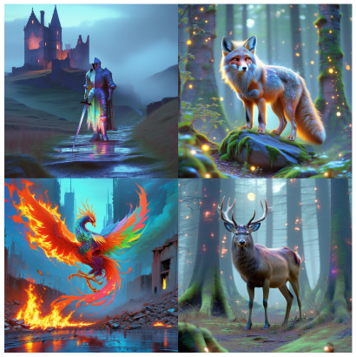}
    }
    \subfigure[CFG]{
    \includegraphics[width=0.45\textwidth]{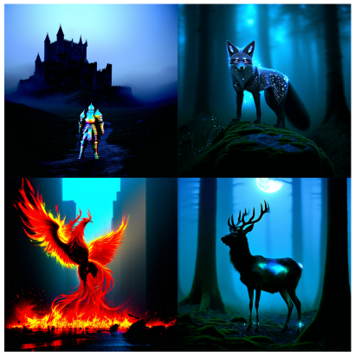}
    }
    \caption{Comparison of ADG and CFG with complex text prompts under $\omega=8$. While CFG shows notable degradation, ADG with normalization maintains stable performance.}
    \label{fig:CTP}
    \vskip -0.2in
\end{figure*}

\subsection{Qualitative Results at Extreme Guidance Weight}
To further evaluate the robustness of ADG under extreme guidance conditions, we conduct additional experiments using a guidance weight of $\omega=20$. Figure~\ref{fig:omega20} presents representative samples generated by both CFG and ADG.

As shown, CFG exhibits severe artifacts, including oversaturation, unnatural textures, and significant semantic drift from the input prompts. In contrast, ADG consistently produces visually coherent and semantically faithful images, even under such high guidance weight. These results further validate the stability and effectiveness of ADG in extreme settings.

\begin{figure*}[ht]
\vskip 0.2in
    \centering
    \subfigure[ADG]{
    \includegraphics[width=0.45\textwidth]{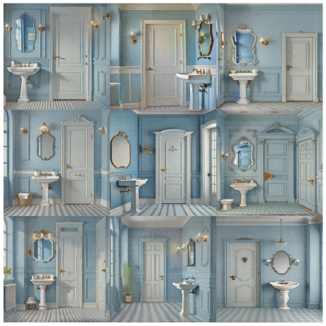}
    }
    \subfigure[CFG]{
    \includegraphics[width=0.45\textwidth]{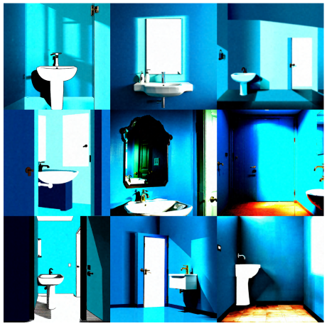}
    }
    \caption{Comparison of ADG and CFG with complex prompts at $\omega=20$. ADG maintains stability, while CFG produces distorted and misaligned results. The prompt is “A room with blue walls and a white sink and door”. }
    \label{fig:omega20}
    \vskip -0.2in
\end{figure*}

\subsection{Results with Different Guidance Weight and NFE}
\label{app:exp:NFE}
We show the methods with different guidance weights and NFEs as in~\cref{fig:ADG,fig:CFGn,fig:CFGpn}. Notably, the ADG and CFG algorithms maintain relative stability across various NFEs, while the CFG++ algorithm does not. For instance, with a guidance weight of 0.2, the CFG++ algorithm exhibits a relatively normal saturation at NFE=10. However, at NFE=40, it shows noticeable oversaturation, and similar trends can be observed with guidance weights of 0.3, 0.4, and 0.5. 

This behavior may be due to the CFG++ algorithm being equivalent to a time-varying guidance weight CFG algorithm. The time-varying guidance weight, influenced by the discrete step size, likely causes its instability relative to both the CFG and ADG algorithms.

Another noteworthy point is that, for different NFE values, ADG consistently generates images that align well with the textual prompt while maintaining relative stability in image content under the same random seed. In contrast, CFG and CFG++ exhibit significant variations in image content across different NFE values. This further highlights the relative stability of our proposed algorithm.

\begin{figure}[H]
\vskip 0.2in
\begin{center}
\centerline{\includegraphics[width=0.7\columnwidth]{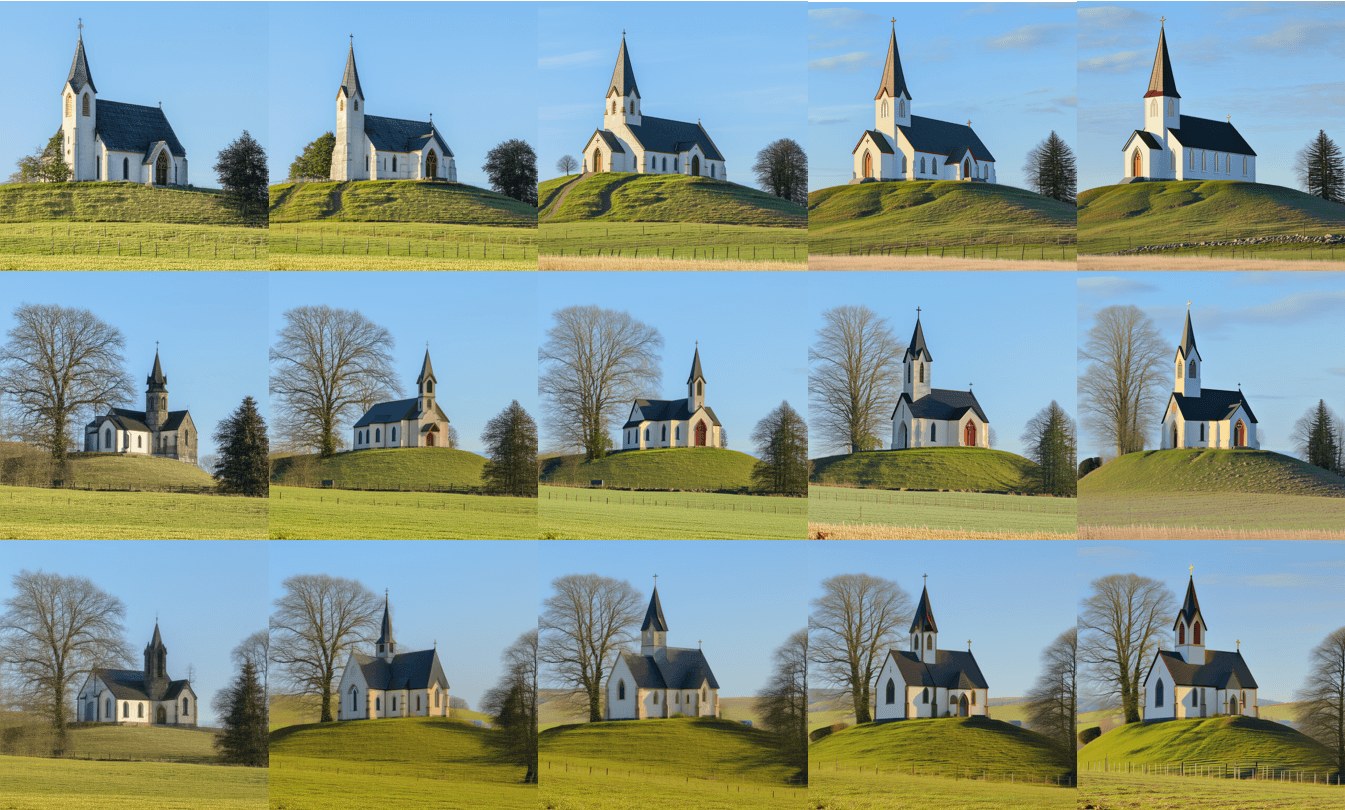}}
\caption{Generated images using ADG for the prompt “\textbf{\textit{The church is up on the hill in the country}}” Rows correspond to different numbers of function evaluations (NFE): 40 (top), 20 (middle), and 10 (bottom). Columns represent increasing guidance weights: 2, 4, 6, 8, and 10 (from left to right). The same random seed is used across all generations. }
\label{fig:ADG}
\end{center}
\end{figure}

\begin{figure}[H]
\begin{center}
\centerline{\includegraphics[width=0.7\columnwidth]{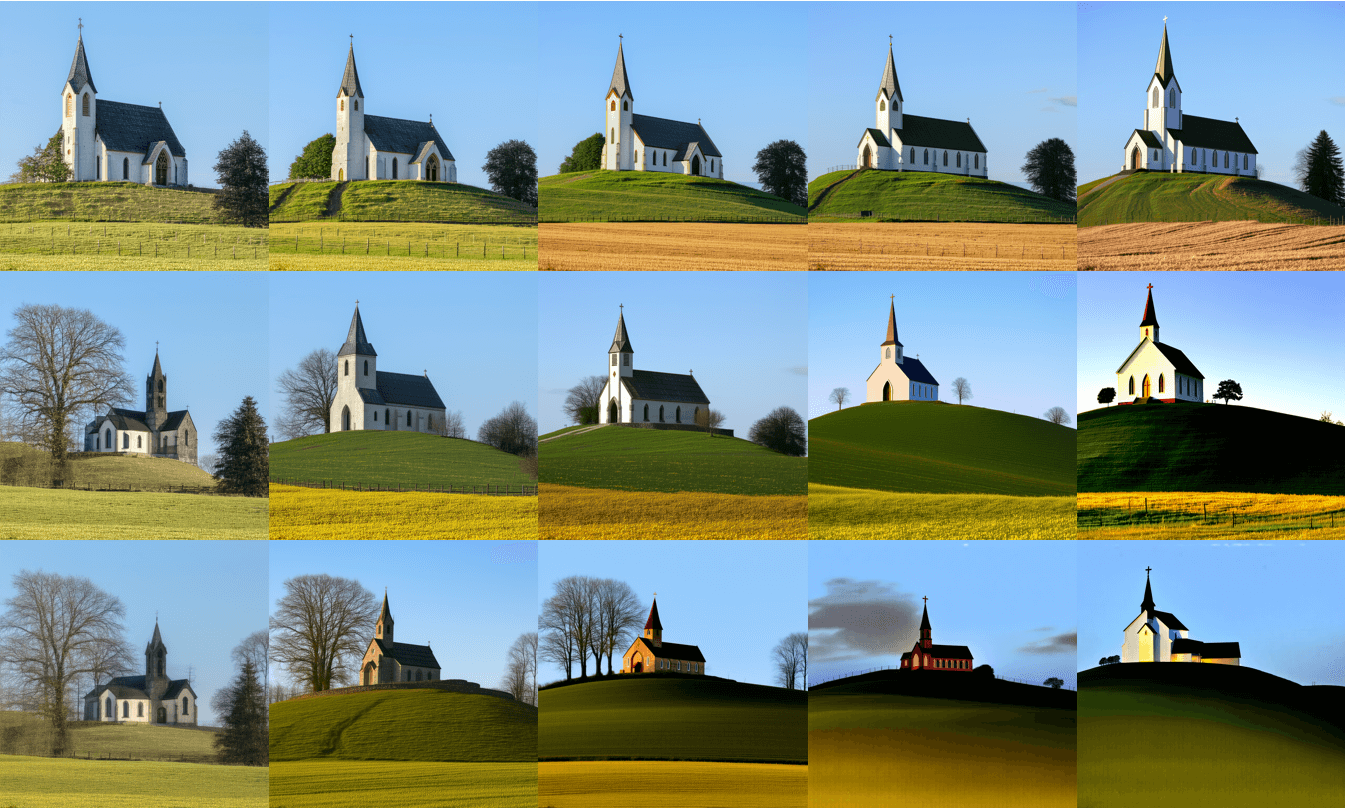}}
\caption{Generated images using CFG for the prompt “\textbf{\textit{The church is up on the hill in the country}}” Rows correspond to different numbers of function evaluations (NFE): 40 (top), 20 (middle), and 10 (bottom). Columns represent increasing guidance weights: 2, 4, 6, 8, and 10 (from left to right). The same random seed is used across all generations.}
\label{fig:CFGn}
\end{center}
\vskip -0.2in
\end{figure}

\begin{figure}[H]
\vskip 0.2in
\begin{center}
\centerline{\includegraphics[width=0.7\columnwidth]{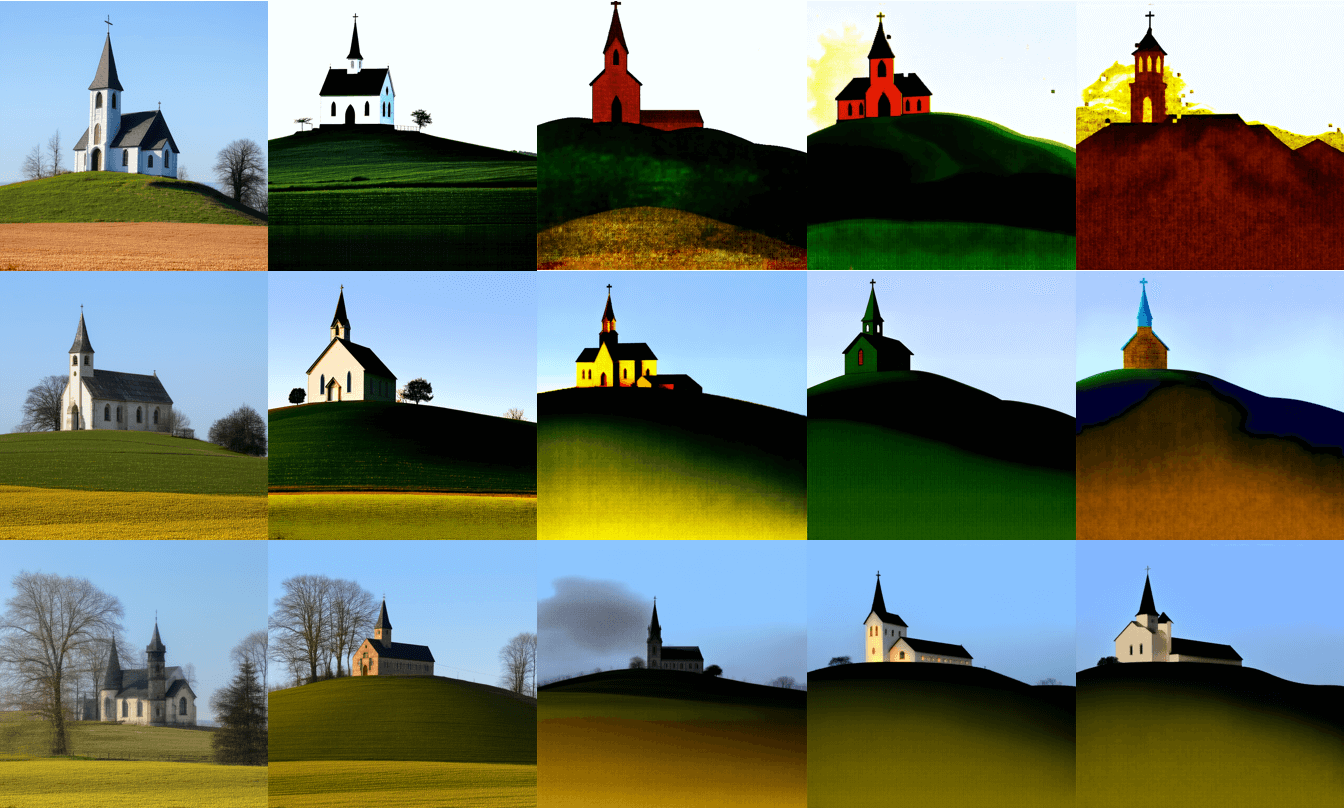}}
\caption{Generated images using CFG++ for the prompt “\textbf{\textit{The church is up on the hill in the country}}” Rows correspond to different numbers of function evaluations (NFE): 40 (top), 20 (middle), and 10 (bottom). Columns represent increasing guidance weights: 0.1, 0.2, 0.3, 0.4, and 0.5 (from left to right). The same random seed is used across all generations.}
\label{fig:CFGpn}
\end{center}
\vskip -0.2in
\end{figure}

\subsection{More Results on COCO datasets}

In this section, we provide more images generated by ADG and reference algorithms for comparison as in~\cref{img:more1,img:more2,img:more3}. 

\begin{figure}[ht]
\vskip 0.2in
\begin{center}
\centerline{\includegraphics[width=0.8\columnwidth]{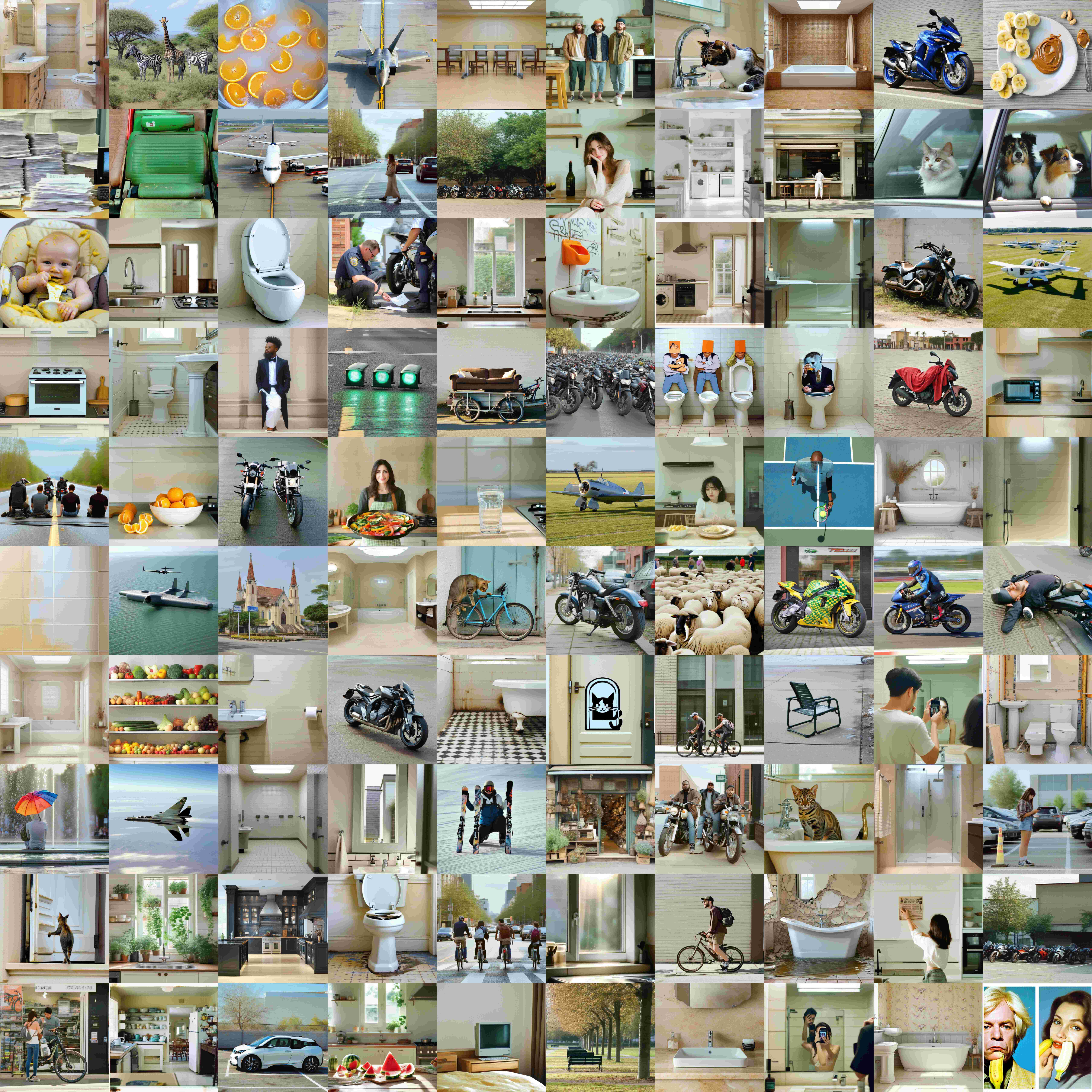}}
\caption{conditional COCO samples using ADG (\(\omega=6\), NFE=10).}
\label{img:more1}
\end{center}
\vskip -0.2in
\end{figure}

\begin{figure}[ht]
\vskip 0.2in
\begin{center}
\centerline{\includegraphics[width=0.8\columnwidth]{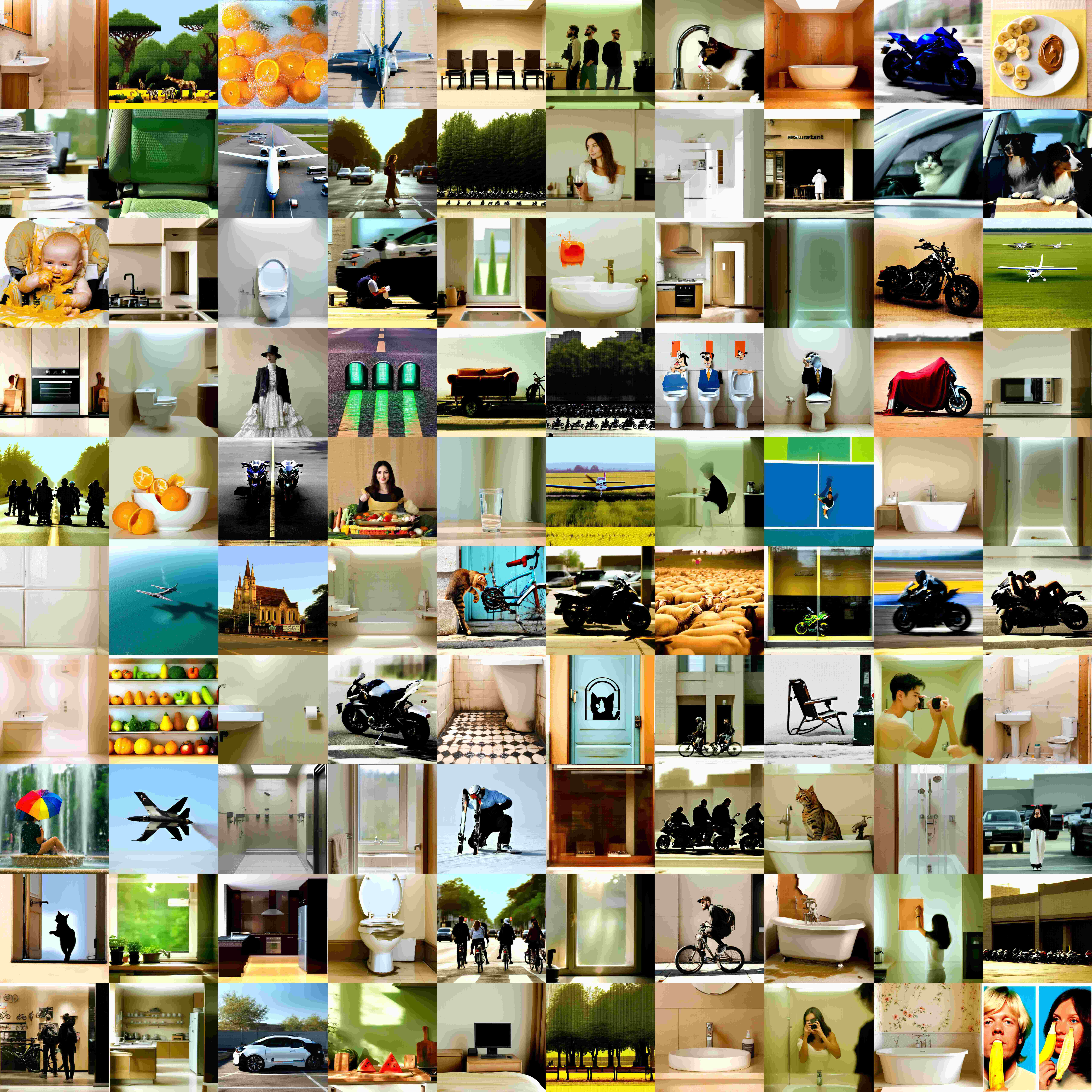}}
\caption{conditional COCO samples using CFG (\(\omega=6\), NFE=10).}
\label{img:more2}
\end{center}
\vskip -0.2in
\end{figure}

\begin{figure}[ht]
\vskip 0.2in
\begin{center}
\centerline{\includegraphics[width=0.8\columnwidth]{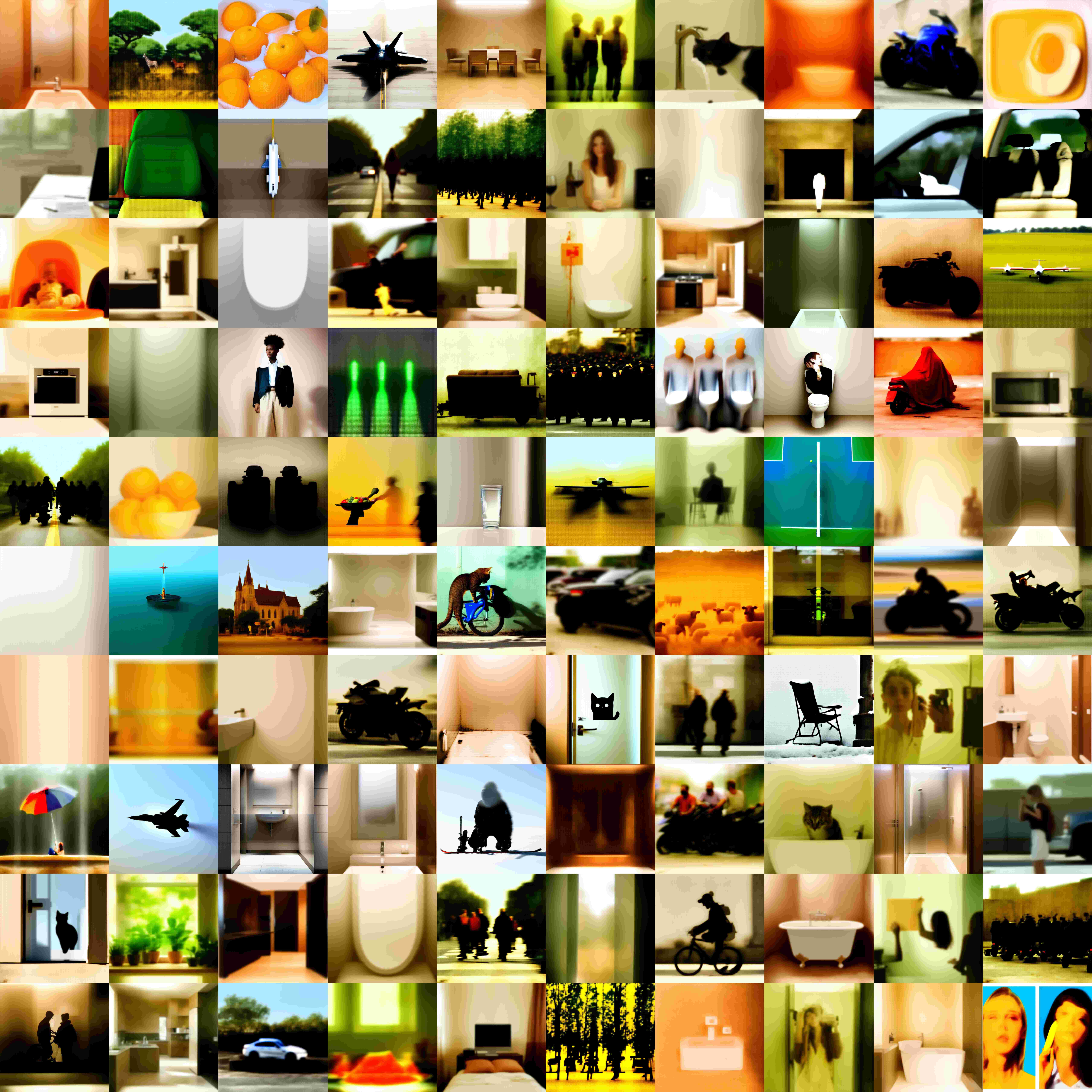}}
\caption{conditional COCO samples using CFG++ (\(\lambda=0.3\), NFE=10).}
\label{img:more3}
\end{center}
\vskip -0.2in
\end{figure}

\end{document}